\newcolumntype{C}[1]{>{\centering\arraybackslash}m{#1}}
\newcolumntype{L}[1]{m{#1}}
\newcommand{\sectionname}{Section}
\newcommand{\secref}[1]{\sectionname~\ref{#1}}
\newcommand{\figref}[1]{\figurename~\ref{#1}}
\newcommand{\tabref}[1]{\tablename~\ref{#1}}
\DeclareMathOperator{\Unif}{Unif}
\newcommand{\dd}{\mathrm{d}}
\newcommand{\kB}{k_{\mathrm{B}}}
\newcommand{\unit}[1]{\mathrm{#1}}
\newcommand{\kg}{\unit{kg}}
\newcommand{\g}{\unit{g}}
\newcommand{\m}{\unit{m}}
\newcommand{\cm}{\unit{cm}}
\newcommand{\mm}{\unit{mm}}
\renewcommand{\l}{\unit{\ell}}
\newcommand{\s}{\unit{s}}
\renewcommand{\d}{\unit{d}}
\newcommand{\K}{\unit{K}}
\newcommand{\A}{\unit{A}}
\newcommand{\J}{\unit{J}}
\newcommand{\N}{\unit{N}}
\newcommand{\Pa}{\unit{Pa}}
\newcommand{\V}{\unit{V}}
\newcommand{\sxrightarrow}[2][]{%
  \mathrel{\text{$\xrightarrow[#1]{#2}$}}%
}
\title{ Dimensionless machine learning: \\ Imposing exact units equivariance}
\author{\name Soledad Villar \email soledad.villar@jhu.edu \\
       \addr Department of Applied Mathematics and Statistics, Johns Hopkins University\\
       \addr Mathematical Institute for Data Science, Johns Hopkins University
       \AND
       \name Weichi Yao \email weichi.yao@nyu.edu\\
       \addr Department of Technology, Operations and Statistics, New York University
       \AND
       \name David W. Hogg \email david.hogg@nyu.edu \\
       \addr Center for Cosmology and Particle Physics, Department of Physics, New York University\\
       \addr Max-Planck-Institut f\"ur Astronomie\\
       \addr Flatiron Institute, a Division of the Simons Foundation
       \AND
       \name Ben Blum-Smith \email bblumsm1@jhu.edu \\
       \addr Department of Applied Mathematics and Statistics, Johns Hopkins University
       \AND
       \name Bianca Dumitrascu \email bmd39@cam.ac.uk\\
       \addr Department of Computer Science and Technology, Cambridge University
       }
\begin{document}

\maketitle
\thispagestyle{empty}

\begin{abstract}\noindent
Units equivariance (or units covariance) is the exact symmetry that follows from the requirement that relationships among measured quantities of physics relevance must obey self-consistent dimensional scalings. Here, we express this symmetry in terms of a (non-compact) group action, and we employ dimensional analysis and ideas from equivariant machine learning to provide a methodology for exactly units-equivariant machine learning: For any given learning task, we first construct a dimensionless version of its inputs using classic results from dimensional analysis, and then perform inference in the dimensionless space. Our approach can be used to impose units equivariance across a broad range of machine learning methods which are equivariant to rotations and other groups. We discuss the in-sample and out-of-sample prediction accuracy gains one can obtain in contexts like symbolic regression and emulation, where symmetry is important. We illustrate our approach with simple numerical examples involving dynamical systems in physics and ecology.
\end{abstract}

\section{Introduction}\label{sec:intro}
In recent years, there has been enormous progress in developing machine learning methods that incorporate exact symmetries.
Indeed, the revolutionary performance of convolutional neural networks \citep{lecun1989backpropagation} was enabled by the imposition of a local translational symmetry at the bottom layer of the networks.
When a learning problem obeys an exact symmetry, we have the strong intuition that imposing the symmetry on the method must improve learning and generalization.
Much of the work on exact symmetries has been situated in physics and natural-science domains \citep{yu-physics,kashinath2021physics}, because physical laws exactly obey a panoply of symmetries.

One of the symmetries of physics---and indeed all of the sciences---is the symmetry underlying dimensional analysis.
Quantities that are added or subtracted must have identical units -- if the units system of the inputs to a function is changed, the units of the output must change accordingly.
This symmetry---which we call ``units equivariance'' but in physics it might be more natural to call this units covariance---is a passive symmetry that applies to all problems (see Section 4.1 of \cite{rovelli2000loop}).
It has many implications.
One is that it is possible to derive scalings and dependencies of outputs on inputs from the units directly.
For example, if a problem involves only a length $L$, an acceleration $g$, and a mass $m$, and the problem is to learn or predict a time $t$, it is possible from units alone to see that $t\propto\sqrt{L/g}$.
Another implication is that inhomogeneous functions, such as transcendental functions and most non-linear functions, can only be applied to dimensionless (unitless) quantities:
if a quantity $x$ is dimensional, an inhomogeneous polynomial expression in $x$ is inconsistent with the principle that quantities to be added or subtracted must have identical units. 
These dimensional symmetries are strict and exact, and apply to essentially every problem in the sciences.
In chemistry, ecology, and economics, for example, the inputs and the outputs of functional relationships have non-trivial units, and the results must be equivariant to the choice of units system. 
Machine learning methods for physical sciences are often designed in such a way that implies units-equivariance, even if they don't explicitly say so (see for instance \cite{2019ApJ, 2021PSJ}).

In this work, we implement a particular case of group-equivariant machine learning, for the group corresponding to changes of units.
We make use of dimensionless quantites, which are the invariants of this group.
We thereby build on our previous work in which group invariants are used to build group-equivariant functions \citep{villar, yao, blum2022equivariant}.
Our procedure here builds \emph{dimensionless features} out of the problem inputs and then ensures that the resulting outputs are scaled back to their correct dimensions or units.
The guiding philosophy of the work is to transform the inputs into invariant features before they are used to train a machine learning method, and then to un-transform label predictions at the output or at test time.
These approaches to symmetry-respecting machine learning are simple to implement and perform well \citep{yao, chense}.

In what follows, we will make the strong assumption that the dimensions and units of all regression inputs are known, complete, and self consistent.
However, a different direction of research is to look at how dimensional relationships or other symmetries are \emph{discovered}.
This is the setting for \citet{constantine2017data}, and \citet{evangelou2021parameter}.
This idea---discovery of dimensional structure---connects to prior work as a particular case of the more general problem of learning symmetries from data \citep{benton2020learning, cahill2020lie, portilheiro2022tradeoff}.

\paragraph{Related work:}
Dimensional analysis is a classical theory with applications in engineering and science \citep{barenblatt_1996}.
These ideas have been connected to machine learning previously \citep{rudolph1998context, frisone2019buckingham, bakarji2022dimensionally}.
The key theoretical result in dimensional analysis is the Buckingham Pi Theorem \citep{buckingham1914pi}, which says that a function is units equivariant if and only if it is a function of a set of dimensionless quantities.
These quantities can (usually) be obtained as products of integer powers of the input features.
Integer linear-algebra algorithms permit discovery of a generating set of dimensionless features \citep{hubert2012rational}.

Incorporating group invariances and equivariances in the design of neural networks has led to advances in many applications from molecular dynamics, to turbulence, to climate and traffic prediction \citep{batzner2021se,wang2020towards,bakarji2022dimensionally, kashinath2021physics, jin2020composing}. 

In many applications, symmetries are implemented approximately via data augmentation \citep{baird1992document, van2001art, wong2016understanding, cubuk2018autoaugment, dao2019kernel, chen2020group, shen2022data}. In other applications symmetries are implemented exactly.
In graph neural networks, the learned functions are equivariant with respect to actions by permutations of the order in which nodes appear in the adjacency matrix \citep{gilmer2017neural,duvenaud2015convolutional, chen2019cdsbm,gama2020graphs}.
Parametrizing such functions efficiently and universally is a difficult task because of its connection with the graph isomorphism problem. Many methods and theoretical results have been proposed to address this challenge  \citep{xu2018powerful,morris2019higher,chen2019equivalence,chen2020can, huang2021short}.

More generally, in equivariant machine learning, neural networks are restricted to only represent functions which are invariant or equivariant with respect to group actions \citep{cohen2016group, maron2018invariant, kondor2018n}.
Some of these methods involve group convolutions~\citep{cohen2016group,Cohen2016steerable,wang2020certified}, irreducible representations of groups \citep{fuchs2020se,kondor2018n,thomas2018tensor,weiler20183d,cohen2018spherical,Weiler2019e2equivariant}, or constraints on optimization \citep{finzi}. 
Others involve the construction of explicitly invariant features \citep{gripaios2021lorentz, haddadin2021invariant, villar, blum2022equivariant}.
We are proponents of the approach of constructing explicit invariants.

In regression problems, a recent line of theoretical work shows that imposing symmetries and group equivariance can reduce generalization error in linear \citep{elesedy2021provably} and kernel \citep{elesedy_kernel} settings, as well as sample complexity in (finite) group-invariant kernel settings \citet{bietti2021sample, mei2021learning}.
Most importantly, without imposing certain symmetries many learning algorithms are not able to learn correctly \cite{brugiapaglia2021invariance}.

\paragraph{Our contributions:}\nopagebreak\begin{itemize}
\item 
We provide a definition for \emph{units equivariance} as an equivariance with respect to a group action, and incorporate it into machine learning models, aided by ideas from classical dimensional analysis.
\item
We show that exact units equivariance is easy to impose on many kinds of learning tasks, by constructing a dimensionless version of the learning task, performing the dimensionless task, and then scaling back in the proper dimensions (and units) at the end (perhaps prior to evaluating the cost function).
Dimensionless quantities are invariants with respect to changes of units and can be computed using discrete linear algebra algorithms.
In this sense, the approach we advocate here is related to approaches based on group invariants to impose exact group equivariances \citep{villar}.
\item
We discuss extensions of theoretical results on generalization bounds for regression problems under symmetries generated by compact groups \citep{elesedy2021provably}, to the group of scalings (which is not compact but reductive).  
\item
We demonstrate with a few simple numerical regression problems that the reduction of model capacity (at fixed complexity) delivered by the units equivariance leads to improvements in generalization (in-distribution and out-of-distribution).
In this context, we discuss symbolic regression and emulator related tasks. We also discuss limitations of our approach in the context of unknown dimensional constants.
\end{itemize}

\section{Units, dimensions, and units equivariance}
Almost every physical quantity (any position, velocity, or energy, say) has units (inches, kilometers per hour, or BTUs, say).
In the physical sciences we are advised to use SI units \citep{si}, which include meters ($\m$), meters per second ($\m\,\s^{-1}$), and Joules ($\J$), for example.
Any energy can be converted to $\J$, any velocity can be converted to $\m\,\s^{-1}$, and so on, according to known conversion factors.
There are dimensionless quantities in physics too, such as Reynolds numbers, or concentrations, but these can also be thought of as having units of unity (or percent or parts per million or so on).

Abstracting slightly, all the (say) SI units are built on \emph{base units} of kilograms ($\kg$), meters ($\m$), seconds ($\s$), kelvin ($\K$), amperes ($\A$), and a few others (such as moles).
That is, it is possible to convert any SI unit into powers of the base units.
For example, a pascal ($\Pa$) is a $\kg\,\m^{-1}\,\s^{-2}$, and a volt ($\V$) is a $\kg\,\m^{2}\,\s^{-3}\,\A^{-1}$.
That is, the units of any physical quantity can be converted to powers of the SI base units.

Abstracting even further, there is a concept of dimensions, which is the physical entity being measured by units, or the thing that is unchanged when you change the units of something.
Two energies, even if measured in different unit systems, both have the \emph{dimensions} of energy.
In this sense, there are not just the \emph{base units} of $\kg$, $\m$, $\s$, $\K$, $\A$, and so on; there are also the \emph{base dimensions} of mass, length, time, temperature, current, and so on.

It is always possible to create, from any dimensioned quantity, a dimensionless quantity by multiplying and dividing by powers of quantities with the base dimensions.
Similarly, it is possible to convert any dimensional relationship into a dimensionless relationship.
This operation is formalized in the Buckingham Pi Theorem \citep{buckingham1914pi}, which motivates this work. 

Not only it is the case that physical quantities that are added must have the same \emph{dimensions};
in detail, the \emph{numerical} addition of the quantities must be performed only after they have been converted to the same \emph{units} as well.
And the products (or ratios) of quantities of units will have the products (or ratios) of the input-quantity units.
Connected to this is a concept of consistent base units or \emph{coherence}:
For example, imagine having a mass $M$ measured in grams, a length $L$ measured in inches, a force $F$ measured in newtons, and a speed $V$ measured in kilometers per hour. These quantities have inconsistent base units, in the sense that the time unit inside the force unit is not the same as the time unit inside the speed unit.
Naive multiplication of different combinations of these quantities will produce outputs with incommensurate units.
They have to be converted to consistent units prior to any arithmetic manipulations.
In what follows, we will assume that the inputs and outputs of any model or problem under consideration has been converted into coherent units, for example the explicitly coherent SI system.

Coherence is a hard requirement, but still leaves a lot of room to maneuver.
For example, in one of the examples in \secref{sec:experiments}, we measure horizontal distances in meters and volumes of water in liters.
These are incoherent technically, since a volume can be expressed as a length cubed.
However, since we express the problem such that horizontal distances and volumes never inter-convert, we can coherently express the problem with this choice.

We consider spaces $\mathcal Z = \prod_{i=1}^d \mathcal X_{\mathbf u_i}$, which consist of coherent (meaning described in a consistent units system) dimensional elements. Each factor $\mathcal X_{\mathbf u_i}$ is a space of values for a given feature, which is measured in units specified by a parameter $\mathbf u_i$. Thus $x_1\in \mathcal X_{\mathbf u_1}$ might be a mass, $x_2\in \mathcal X_{\mathbf u_2}$ a temperature, $x_3\in \mathcal X_{\mathbf u_3}$ a velocity, etc. As a set, each $\mathcal X_{\mathbf u_i}$ is just $\mathbb R$, but the specification of its dimensions via $\mathbf u_i$ endows it with a specific action by a group $G$ of rescalings. A precise development of this setup follows.

We fix a list of $k$ base units in terms of which all the desired features can be described. For example, if the features consist of energies, temperatures, velocities, forces, masses, and accelerations, the base units could be $(\kg, \m, \s, \K)$ (and $k=4$). The choice of base units determines a {\em rescaling group} $G := \mathbb R_{>0}^k$. An element $(g_1,\dots,g_k)\in G$ rescales the $i$th base unit by a factor of $g_i$ for each $i$.

Say there are $d$ features. Then for each $i=1,\dots, d$, we express the units of the $i$th feature in terms of the base units, and record this expression in an integer vector $\mathbf u_i\in \mathbb Z^k$, whose $j$th component is the exponent to which the $j$th base unit occurs in this expression. Continuing the example, if the first feature is an energy measured in Joules, then $\mathbf u_1 = [1,2,-2,0]$, because $1\,\J = 1\, \kg\,\m^2/\s^2$. We then define $\mathcal X_{\mathbf u_i}$ to be the real line $\mathbb R$ equipped with the action by $G$ induced by its action on the base units. Explicitly, if $x_i\in \mathcal X_{\mathbf u_i}$ and $g=(g_1,\dots,g_k)\in G$, then the action is given by the formula
\begin{equation} \label{eq.action.factor}
g\cdot x_i = \left(\prod_{j=1}^k g_j^{-u_{ij}}\right) x_i.
\end{equation}
In our running example, if we replace $\kg$ with $\g$ and $\m$ with $\cm$, leaving $\s$ and $\K$ untouched, then the group element that accomplishes this rescaling is $g=(0.001,0.01,1,1)$, and if $x_1=2.9$, representing a value of $2.9\, \J$, then 
\[
g\cdot x_1 = (0.001)^{-1}(0.01)^{-2}(1)^{2}(1)^0 (2.9) = 2.9 \times 10^7
\]
reflecting the fact that $2.9\, \J = 2.9\times 10^7\, \g\,\cm^2/s^2$.

We call the space of features $\mathcal Z$ a \textbf{units-typed space}. It is the cartesian product
\begin{equation} \label{eq.Z}
    \mathcal Z = \prod_{i=1}^d \mathcal X_{\mathbf u_i}.
\end{equation}
It is a real vector space under coordinatewise addition and multiplication by (dimensionless) scalars.  Furthermore, the elements of $\mathcal X_{\mathbf u}$ can be multiplied by elements of $\mathcal X_{\mathbf u'}$. In summary the algebraic rules for $ x \in \mathcal X_{\mathbf u}$ and $ x' \in \mathcal X_{\mathbf u'}$, $\alpha \in \mathbb R$ and $\gamma \in \mathbb Z$ are the following:  
\begin{align}
 \alpha \, (x, \mathbf u) &=(\alpha\, x, \mathbf u), \text{where $\alpha$ is a dimensionless scalar}\\
    (x, \mathbf u) + (x', \mathbf u' ) &= \left \{
    \begin{array}{l}
    (x+x', \mathbf u) \text{ if } \mathbf u=\mathbf u' \\
    \text{does not exist otherwise}
    \end{array}
    \right. \\
    (x, \mathbf u)\, (x', \mathbf u' ) &=(x\, x', \mathbf u + \mathbf u') \\
    (x, \mathbf u)^{\gamma} &= (x^\gamma, \gamma\,\mathbf u ), \text{ where $\gamma$ is a (dimensionless)  integer}.
\end{align}
Thus $\mathcal X_\mathbf{u}$ can be seen as a homogeneous component of degree $\mathbf u$ in a $\mathbb Z^k$-graded algebra. We do not make explicit use of this, however.

Because each factor $\mathcal X_\mathbf{u}$ carries an action \eqref{eq.action.factor} by the rescaling group $G$, the space $\mathcal Z$ does as well. Note that the action is completely specified by the ordered list $(\mathbf u_1,\dots, \mathbf u_d)\in (\mathbb Z^k)^d$ of units vectors. Sometimes we will denote $\mathbf x \in \mathcal Z$ as $\mathbf x=(x_i, \mathbf u_i)_{i=1,\ldots, d}$ to indicate the units of each of its features. 

We call \textbf{units-typed function} any function between units-typed spaces. Definition \ref{def.equiv} defines units equivariant functions.

\begin{definition} \label{def.equiv}
A units-typed function is any function $f: \mathcal Z_\mathcal X \rightarrow \mathcal Z_\mathcal Y$. If in addition it has the property that
\begin{equation}
    f(g\cdot x) = g\cdot (f(x))
\end{equation}
for every $g$ in the rescaling group $G$, then we say that $f$ is a \textbf{units-equivariant function}.
\end{definition}

For example, imagine that $f$ is a units-equivariant function that takes as input a mass, a length, and an acceleration, and returns a time and an energy.
If the mass, length, and acceleration inputs are multiplied by $1$, $1$, and $1/25$ respectively, then the time and energy outputs should end up multiplied by $5$ and $1/25$ respectively.
That is, the units equivariance is an expression of the natural dimensional and units-conversion scalings.

In order for a non-constant function $f$ to be units-equivariant, it will be necessary that each of the vectors of units of $\mathcal Z_{\mathcal Y}$ lies in the span of the set of input vectors of units of $\mathcal Z_{\mathcal X}$.

In other work \citep{villar} we are concerned with geometric (coordinate-free) properties of scalar, vector, and tensor inputs to machine learning problems.
Each component of a vector or a tensor with units (such as a velocity or a stress) can be represented as an element of $\mathcal X$.
In physics all elements of a vector or tensor must have the same units vector $\mathbf u$. By combining the formulation from \citep{villar} with our current formulation, we can make units-equivariant and coordinate-free models (see \secref{sec:experiments}).

\section{Units-equivariant regressions} \label{sec:approach}

Given training data $(\mathbf x_t, \mathbf y_t)_{t=1,\ldots N}$, where $\mathbf x_t\in \mathcal Z_X$ and $\mathbf y_t\in \mathcal Z_Y$ (both units-typed spaces), a units-equivariant regression is a regression restricted to a space of units-equivariant functions.

There are multiple approaches for imposing exact symmetries on machine learning methods. Here we take an invariant-features approach \citep{villar}.
We begin by algorithmically constructing a featurizer $\phi(\cdot)$ that constructs dimensionless features $\xi$ from the dimensioned input data $\mathbf x$, and a decoder $g_{\mathbf x, \mathbf v}(\cdot)$ that converts dimensionless label predictions $\hat{\eta}$ into dimensioned training-label predictions $\hat{\mathbf y}$ with dimensions of $\mathbf v$.
See \figref{fig:approach} for a visualization of the setup.

Regression (or classification, or any other task) proceeds as usual, but in the space of purely dimensionless features and labels, with the dimensioned training labels appearing only in the cost function.
The concept underlying this approach is that, for the units equivariance to be exact, it is necessary that the inputs to any method that implements nonlinear functions of the input $\mathbf x$ (such as a multilayer perceptron, or a kernel regression) act instead only on dimensionless features $\xi$, because otherwise the nonlinearities effectively (internally) add and subtract quantities with different dimensions, which violates the equivariance. This approach borrows ideas from the Buckingham Pi Theorem \citep{buckingham1914pi}, and uses technology from linear algebra over the integers \citep{stanley2016smithnormalform} to construct the featurizer $\phi(\cdot)$ algorithmically. 

\begin{figure}[tp]
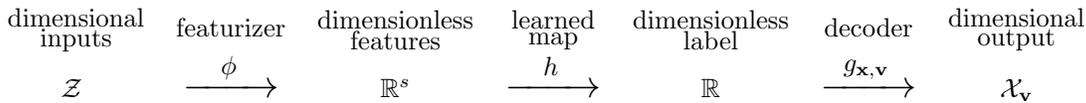

    \centering \small
    \begin{tabular}{ccccccc}
    \hspace{-0.8cm}
         $\let\scriptstyle\textstyle\substack{\text{dimensional}\\ \text{inputs}}$ &
         featurizer &
         $\let\scriptstyle\textstyle\substack{\text{dimensionless}\\ \text{features}}$ &
         $\let\scriptstyle\textstyle\substack{\text{learned}\\ \text{map}}$ &
         $\let\scriptstyle\textstyle\substack{\text{dimensionless}\\ \text{label}}$ &
         decoder & $\let\scriptstyle\textstyle\substack{\text{dimensional}\\ \text{output}}$ 
         \\
         \hspace{-.8cm}$\mathcal Z$
         & 
         ${\Large \sxrightarrow[\hspace*{1cm}]{\phi}}$ &  $\mathbb R^s$ & ${\Large \sxrightarrow[\hspace*{1cm}]{h}}$ & $\mathbb R$ & ${\Large \sxrightarrow[\hspace*{1cm}]{g_{\mathbf x, \mathbf v}}}$ & $\mathcal X_{\mathbf v}$ 
    \end{tabular}
    \caption{Overview of the general approach.}
    \label{fig:approach}
\end{figure}

Specifically, the input space $\mathcal Z$ includes $d$ dimensional input features, such as mass, temperature, etc \eqref{eq.Z}. Some of the inputs will be fundamental constants, such as Newton's constant, speed of light, etc. 
The featurizer $\phi: \mathcal Z \to \mathbb R^s$ delivers $s$ (dimensionless) products of integer powers of the numerical elements of $\mathcal Z$.
If $\mathbf x=(x_i, \mathbf u_i)_{i=1, \ldots, d}$ then $\phi(\mathbf x) =(\phi_1(\mathbf x), \ldots, \phi_s(\mathbf x))$ where for all $j=1,\ldots s$ we have 
\begin{equation} \label{b.pi}
\xi_j=\phi_j(\mathbf x)= \prod_{i=1}^{d} x_i^{\alpha_{ji}} \text{ where }  \sum_{i=1}^{d} \alpha_{ji} \mathbf u_i =\mathbf 0 \in \mathbb Z^k,
\end{equation}
where the constraint guarantees that $\xi_j$ is dimensionless.
The exponents $\alpha_{ji} \in \mathbb Z$ can be found by solving the system of diophantine linear equations in \eqref{b.pi} and the solutions form a lattice, the dimension of which can be computed as:
\begin{equation}
\small
    \text{$\#$dimensionless features = $\#$input variables - $\#$independent units},
\end{equation}
where the number of \emph{independent} units coincides with the number of linearly independent vectors in $\{\mathbf u_i\}_{i=1}^d$. For example, consider three velocities $v_1,v_2,v_3$ with units $\m\,\s^{-1}$, the number of units is two ($\m $ and $\s$), but the number of independent units is one, making the dimensionless scalars a two-dimensional lattice. 

We could select our featurizer to produce a basis of the lattice (we use the Smith Normal Form to this end \citep{stanley2016smithnormalform}, a similar approach to \citep{hubert2012rational}, which uses the Hermite Normal Form), or we could select our featurizer to produce all lattice points within a bounded region. 

If the dimensioned output is $\mathbf y = (y, \mathbf v) \in \mathcal X$ we find an integer solution $\alpha_{yi}$ of $\sum_{i=1}^{d} \alpha_{yi} \mathbf u_i =\mathbf v$ and the decoder $g_{\mathbf x, \mathbf v}:\mathbb R \to \mathcal X$ is $g_{\mathbf x, \mathbf v}(\hat\eta) = \hat\eta \prod_{i=1}^{d} x_i^{\alpha_{yi}}$.
In words, the decoder finds from $\mathbf x$ a product of integer powers of elements of $\mathbf x$ that has the same dimensions as the output label $\mathbf y$, and multiplies the dimensionless label prediction $\hat{\eta}$ by that product.
This is possible because any non-constant units-equivariant function must have the vector $\mathbf v$ lies in the span of the vectors $\{\mathbf u_i\}_{i=1}^d$. 
In practice there is not a unique choice for $\alpha_{yi}$. One practical solution is to learn different models for different solutions $\alpha_{yi}$ and use an ensemble method (see Section \ref{sec:experiments}). 

The training of a regression model usually involves optimization of a loss function.
This is often a norm of a difference between the training labels $\mathbf y$ and their predictions $\hat{\mathbf{y}}$.
When the problem is made dimensionless, this loss can be made dimensionless as well, or else
it can be left dimensional.
In many contexts the loss is a chi-squared objective or a log-likelihood, or can be interpreted as such.
In these cases the loss is dimensionless naturally.
The units-equivariance approach we recommend is agnostic to whether the loss is dimensionless or dimensional; adoption of this approach does not require adoption of any particular loss.

This approach does, however, place several significant burdens on the user:
All elements of the input $\mathbf x$ and output $\mathbf y$ of the method must have well-defined and known units, and the units information must be encoded in the form of a $k$-vector of integer powers of a well-defined list of $k$ base units.
Also, sometimes quantities external to the natural training data must be included with the inputs $\mathbf x$, such as parameters or fundamental constants (such as Newton's constant and the speed of light), that are relevant to unit conversions or natural relationships among quantities with different units.

The key idea is that the dimensionless featurizer $\phi(\cdot)$, which produces $s$ dimensionless features, can be compared with an arbitrary featurizer $\tilde{\phi}(\cdot)$, which produces $p$ features that are not necessarily dimensionless. For example, we can consider the space spanned by rational monomials of inputs of a certain degree. 
\begin{definition}[Rational monomial]
A function $P:\mathbb R^d \to \mathbb R$ is a \textbf{rational monomial} (also known as a \textbf{Laurent monomial}) if 
\begin{equation} \label{eq.rational.monomial}
    P(x_1,\ldots, x_d) = \prod_{i=1}^d a_i x_i^{\alpha_i},
\end{equation}
where the coefficients $a_i\in \mathbb R$ and the exponents $\alpha_i \in \mathbb Z$. The degree of $P$ is defined as $\sum_{i=1}^d |\alpha_i|$.
\end{definition}

The units-equivariance approach \eqref{b.pi} will produce features consisting of dimensionless rational monomials (example: $\text{mass}*\text{spring constant}*(\text{length})^2*(\text{momentum})^{-2}$ is a dimensionless rational monomial), whereas a non-equivariant approach can produce arbitrary features of the same type (rational monomials of bounded degrees). We claim that imposing the units equivariance provides a good inductive bias for the learning problem. In \secref{sec:bias} we briefly discuss the generalization gains of imposing this exact symmetry.

This approach plays well with many machine learning approaches, including linear regression, kernel regression, and deep learning:
It involves only a swap-in replacement for the featurizing or feature normalization that is usually done prior to training a model, and a tweak to the output layer of the method.
Thus it can be adapted to almost any machine learning methods in use at the present day.

\section{Generalization improvements of units equivariance}
\label{sec:bias}

Imposing the exact units-equivariance symmetry in regression tasks improves the prediction performance even in cases where the held out data has out-of-distribution properties.
This empirical improvement can be partly attributed to the fact that the symmetry acts like a physics-informed prior which constraints the regression inputs to satisfy correct dimensional scaling relationships. 
Another reason could be due to dimensionality reduction: when the task is made dimensionless the resulting number of independent, dimensionless inputs is always strictly less than the number of dimensional inputs. 
One way to make this argument precise is by assuming that the units-equivariant functions are a subset of the baseline hypothesis class. 
In this case, at fixed model complexity (e.g. rational monomials of a certain degree), the number of free parameters or the model capacity goes down when the problem is made dimensionless.

In Appendix \ref{sec:analysis} we formally discuss the generalization improvements of certain equivariant models and how these results translate to units-equivariance.  We based our analysis on recent results that show how to compute explicitly the gains in terms of generalization gaps and sample complexity of imposing group invariances and equivariances in machine learning. The results from \citet{mei2021learning} and \citet{bietti2021sample} hold for finite groups, whereas the results from \citet{elesedy2021provably} and \citet{elesedy_kernel} hold for general compact groups.
Specifically, \citet{elesedy2021provably} shows that if we are aiming to learn a target invariant function $f^*$ from samples from an invariant distribution $\mu$, then for any estimator $\hat f$, the (Reynolds) projection of $\hat f$ onto the space of invariant functions has smaller expected error. We explain the details in Appendix \ref{sec:analysis}.  

The problem we consider here, units equivariance, uses the group of scalings, which is unfortunately non-compact so the results from \citet{elesedy2021provably} don't directly apply.
In particular, given a function, it is unclear what is the \emph{correct} notion of its projection onto the space of invariant functions.
This is particularly relevant because our approach does not compute a regression to later project its estimator onto the space of invariant functions. It directly solves a regression in a space of invariant functions. The question of how much does one gain by using this approach assumes the existence of a certain non-invariant baseline.

Given a space of functions $\mathcal F$ and $\bar {\mathcal F}$ the subspace of invariant functions, then a projection onto the space of invariant functions is just an operator $P:\mathcal F \to \bar{\mathcal F}$ that fixes $\bar {\mathcal F}$ pointwise. Therefore the way to project a function onto the space of invariant functions is far from being unique. 
There are two notions of projection we consider, (1) the so-called Reynolds projection, which generalizes the notion of averaging the function along the group orbit, (2) the orthogonal projection with respect to the measure $\mu$ used to generate the data.
In the case of compact groups and data sampled from invariant measures both notions coincide, giving a very simple expression for the generalization gap of a function in comparison with its projection onto the space of invariant functions. However, we show that in the non-compact case, the notions do not coincide for any real measure $\mu$.

In Appendix \ref{sec:analysis}:
\begin{itemize}
\item We summarize the ideas from \citet{elesedy2021provably} that allow them to compute a generalization gap for compact groups. In a nutshell: We assume we are learning an unknown $G$-invariant function $f^*:\mathbb R^d \to \mathbb R$ from samples. We assume that $G$ is a compact group and the data is sampled from a $G$-invariant measure in $\mathbb R^d$ ($x\sim \mu \in \mathbb R^d$, $\mu(U) = \mu(g\cdot U)$ for all $g\in G$ and measurable $U\subset\mathbb R^d$).
The risk of a function $f:\mathbb R^{d}\to \mathbb R$ is defined as the expected test error: 
\begin{equation}
    \mathcal R(f):= \mathbb E_{x\sim \mu} \|f(x) - f^*(x)\|^2
\end{equation}
Elesedy and Zaidy show that by projecting $f$ to the space of invariant functions, the generalization gap always improves. In particular:
\begin{equation}
    \Delta(f,\bar f):= \mathcal R(f) - \mathcal R(\bar f) = \|f^\perp \|_\mu^2 \geq 0
\end{equation}
where $\bar f$ is the (orthogonal or Reynolds) projection of $f$ onto the space of invariant functions (see Appendix \ref{sec:analysis} for the precise definition), and $f^\perp$ is its orthogonal complement in $\mathscr L_2(\mathbb R^d, \mu)$. 

\item We explain how to extend these ideas to non-compact groups using Weyl's trick.
\item We show that if we replace $\mathbb R^d$ with a complex domain in $\mathbb C^d$, and the measure is invariant with respect to complex scalings of modulus $1$, then the results from \citet{elesedy2021provably} apply and we can prove a non-negative generalization gap for equivariant functions with respect to complex scalings.

\item We show that if the domain is real, the two notions of projection (orthogonal and Reynolds) don't match for any choice of measure.
\end{itemize}

\paragraph{Out-of-distribution generalization}
The generalization improvements from \cite{elesedy2021provably, elesedy_kernel, bietti2021sample} assume that the data is sampled from group invariant measures. 
However, since the group of scalings is not compact, scaling invariant probability measures do not exists.  
The exact scaling symmetries enforced by the units equivariance extend to arbitrary dilations of the base dimensions. 
Thus they connect function outputs for very different inputs.
In the context of regression this means that the predictions of trained units-equivariant models ought enable accurate predictions far outside the domain of the training set.
One way to state this is to compare the domains of the dimensionless features to the domains of the raw dimensional features.

Mathematically, this can be stated in terms of out-of-distribution generalization \citep{arjovsky2020out, geisa2021towards, dey2022out}. 
In particular, consider a generic problem with $d$ inputs with dimensions made from $k$ base units, and (therefore) a basis of $s=d-k$ dimensionless quantities. Consider a training set sampled from a distribution $\mu$ supported in a compact set $\mathcal D \subset \mathbb R^d$. This induces a distribution $\phi(\mu)$ in the space of dimensionless features $\mathbb R^s$.
Many distributions in $\mathbb R^d$ (possibly even supported in disjoint sets) map to $\phi(\mu)$.
For a trivial example, consider $d=2$ mass inputs, such that $k=1$; if the training set has masses in $\kg$ drawn from $\Unif(0,1)$ and the test set has masses in $\g$ drawn from $\Unif(0,10^4)$, they will nonetheless both have the same distribution in the one ($d-k=1$) available independent dimensionless quantity (the ratio of masses).
This is one reason why this method allows us to generalize to settings that can be considered out-of-distribution in the original input space.
We conjecture that the out-of-distribution generalization improvement spans beyond this trivial case.
We show numerical evidence of this claim in \secref{sec:experiments}.
This observation is corroborated in related literature where different types of equivariances are shown related to data augmentation procedures \citep{chen2020group}, which, in turn, lead to empirically favorable performance in out-of-distribution settings \citep{liang2022metashift}.
We believe that a rigorous out-of-distribution result could be formalized using techniques from domain adaptation and meta-learning \citep{ben2010theory, mansour2009domain, hanneke2019value, li2018learning, kang2018transferable};
we leave that for future work.


\section{Experimental demonstrations}\label{sec:experiments}

\paragraph{Symbolic regression: Simple springy pendulum}
In this example, we consider a pendulum bob of mass $m$ (units of $\kg$) at the end of a linear spring, swinging under the influence of gravity.
The total mechanical energy or hamiltonian $H$ (units of $\kg\,\m^{2}\,\s^{-2}$) of this system consists of a kinetic energy and two potential-energy contributions:
\begin{align}
    H = \underbrace{\frac{1}{2}\,\frac{|\mathbf{p}|^2}{m}}_{\text{kinetic energy}}
    \underbrace{+\,\frac{1}{2}\,k_\text{s}\,(|\mathbf{q}| - L)^2}_{\substack{\text{spring} \\ \text{potential energy}}}
    \underbrace{-\,m\,\mathbf{g}^\top\mathbf{q}}_{\substack{\text{gravitational} \\ \text{potential energy}}} ~,
\end{align}
where $\mathbf{p}$ is the 3-vector momentum of the bob (units of $\kg\,\m\,\s^{-1}$), $|\mathbf{p}|^2 = \mathbf{p}^\top\mathbf{p}$, $k_\text{s}$ is the spring constant (units of $\N\,\m^{-1}=\kg\,\s^{-2}$), $\mathbf{q}$ is the 3-vector position of the bob relative to the pivot (units of $\m$), $|\mathbf{q}|=\sqrt{\mathbf{q}^\top\mathbf{q}}$, $L$ is the natural length of the spring (units of $\m$), $\mathbf{g}$ is the 3-vector acceleration due to gravity (units of $\m\,\s^{-2}$).
The natural base units here are the SI base units $(\kg,\m,\s)$, but they could just as easily be (stone, furlong, fortnight).

This is almost the simplest possible physics problem.
We honor its simplicity by constructing an extremely simplifed symbolic regression:
Given samples of the parameters $m,k_s,L,\mathbf{g}$, the initial conditions $\mathbf{p},\mathbf{q}$, and the corresponding values of the hamiltonian $H$, we show that (as expected) we can infer the exact functional form of the hamiltonian, and that, imposing units-equivariance signficantly reduces the complexity and prediciton error of the problem.

First we observe that, in Newtonian mechanics, the hamiltonian---or total mechanical energy---is a scalar.
Thus the hamiltonian can be a function only of scalars and scalar products of the vector and scalar inputs \citep{villar}.
We construct all rational scalar monomials of the inputs up to a well-defined degree, including, for example, $m\,k_s\,|\mathbf{q}|\,(\mathbf{g}^\top \mathbf{p})^{-2}$, where the vectors $\mathbf{g},\mathbf{p},\mathbf{q}$ are implicitly column vectors, $|\mathbf{q}|$ is the magnitude of $\mathbf{q}$, and $\mathbf{g}^\top \mathbf{p}$ is the scalar (inner) product of $\mathbf{g}$ and $\mathbf{p}$.
For our purposes, the degree of the rational monomial is the maximum absolute value exponent appearing in the expression, so the example would have degree $2$.
Then we construct all \emph{dimensionless} rational scalar monomials of the inputs up to the same well-defined degree, including, for example, $m\,k_s\,L^2\,|\mathbf{p}|^{-2}$, which is also of degree $2$ but dimensionless.
It turns out that there are far fewer dimensionless rational scalar monomials than rational scalar monomials to any degree.

In detail, the dimensional scalar inputs to our monomial lists are the $9$ scalars $m$, $k_s$, $L$, $|\mathbf{g}|$, $(\mathbf{g}^\top \mathbf{p})$, $(\mathbf{g}^\top \mathbf{q})$, $|\mathbf{p}|$, $(\mathbf{p}^\top \mathbf{q})$, $|\mathbf{q}|$.
We produce all monomials to maximum degree $2$ but subject to two additional rules:
While we count scalars $|\mathbf{g}|$, $|\mathbf{q}|$, and $|\mathbf{p}|$ as having degree $1$, we count scalars $\mathbf{g}^\top \mathbf{p}$ and $\mathbf{p}^\top \mathbf{q}$ and so on as having degree $2$ (so at maximum degree $2$, say, they cannot appear squared).
We also did not permit the dot products $\mathbf{g}^\top \mathbf{p}$ and $\mathbf{p}^\top \mathbf{q}$ and so on to appear with negative powers (because these inverses can produce unbounded singular values in the design matrix).
With these inputs and these rules, there are $286$ dimensionless monomials to (only!) degree $2$, and $187\,500$ total monomials (irrespective of dimensions) to degree $2$.

Given the enormous difference between $286$ and $187\,500$, it is obvious that units equivariance is incredibly informative.
We demonstrate the value experimentally by performing symbolic regressions for the hamiltonian $H$ with two different objectives; one an L2, and the other a LASSO objective.
In each case, the objective is the norm of the difference between the predicted and true hamiltonian value, but made dimensionless by dividing by the quantity $k_s\,L^2$, which has units of energy.
In the L2 case, 8192 training-set objects are used, and in the LASSO case, 128.
Training-set objects are drawn from distributions in $m, k_s, L, \mathbf{g}, \mathbf{p}, \mathbf{q}$, in which the scalars $m, k_s, L$ are drawn from uniforms and the vectors $\mathbf{g}, \mathbf{p}, \mathbf{q}$ are drawn from isotropically oriented unit vectors times magnitudes drawn from uniforms.
In both cases, the regression applied to the dimensionless-feature design matrix delivers machine-precision-level errors on held-out data, and linear-fit coefficients that represent the correct formula or expression for the hamiltonian.

The large number of baseline monomials ($187\,500$) makes it computationally difficult to perform equivalent baseline comparisons. 
This alone demonstrates the value of the units-equivariant approach for symbolic-regression-like problems.
However, in order to test this, we augment the $286$ dimensionless monomials with $500$ randomly chosen dimensional monomials and---at these training-set sizes---the symbolic regressions fail:
They deliver an order of magnitude worse mean-squared error on held-out test data and they do not find the correct coefficients for the hamiltonian expression.
The code is publicly available at Google Colab.\footnote{See \url{https://dwh.gg/springy}.}

\paragraph{Emulator: Springy double pendulum}
Next we consider the task of learning the dynamics of the springy double pendulum, which is a pair of single springy pendula connected with a free pivot\footnote{The source code is published in \url{https://github.com/weichiyao/ScalarEMLP/tree/dimensionless}.} (Figure~\ref{fig:dp_system}).
The goal here is to predict its trajectory at later times from different initial states.
For this task, for each of the realization of the $N$ training data, $m_1, m_2, k_{s1}, k_{s2}, L_1, L_2$ are randomly generated from $\Unif(1,2)$, as well as the norm of the gravitational acceleration vector $\mathbf{g}$. Initializations at $t_0$ of the pendulum positions and momenta are generated as those in \citet{finzi} and \citet{yao}. 
The training labels are the positions and momenta at a set of $\tilde{T}$ later times $t\in\{t_1,\ldots,t_{\tilde{T}}\}$:
\begin{equation}
\mathbf{z}(t)=(\mathbf{q}_1(t),\mathbf{q}_2(t),\mathbf{p}_1(t),\mathbf{p}_2(t)), \quad t\in\{t_0, \ldots,t_{\tilde{T}}\}. \label{eq.training}
\end{equation}  

\begin{figure}[t]
    \centering
    \includegraphics[scale=0.15]{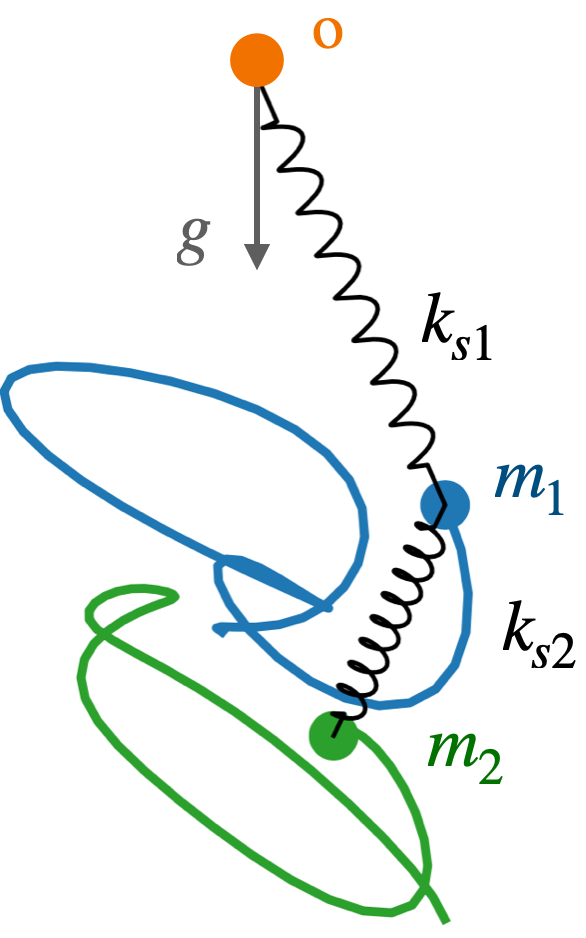}
    \caption{The springy double pendulum.}
    \label{fig:dp_system}
\end{figure}

In our experiments, the training set consists of positions and momenta of the pendula in a sequence of $\tilde T=10$ equispaced consecutive times sampled from a sequence of $T=60$ equispaced times obtained by integrating the dynamical system according to its ground truth parameters. 
In the testing stage, the trajectory at $t=1,\ldots,T$ from different initial states are predicted given the initializations at $t=0$. We consider three different testing setups to compare the dimensionless scalar-based implementation with the dimensional baseline considered in \citet{yao}. This baseline is currently state-of-the-art on this problem. It embodies Hamiltonian and geometric symmetries and performs very well \citep{yao}.

The test data used in Experiment~1 is generated from the same distribution as the training dataset. The test data used in Experiment~2 consists of applying a transformation to the test data in Experiment~1, where each of the input parameters that include a power of $\kg$ in its units ($m_1$, $m_2$, $k_{s1}$, $k_{s2}$, $\mathbf{p}_1(0)$ and $\mathbf{p}_2(0)$) is scaled by a factor randomly generated from $\Unif(3,7)$. The test data used in Experiment~3 has the input parameters $m_1$, $m_2$, $k_{s1}$, $k_{s2}$, $L_1$ and $L_2$ generated from $\Unif(1,5)$. 
We use the same training data $N=30000$ for all three experiments and each test set consists of $500$ data points. That is, Experiments~2 and 3 have out-of-distribution test data, relative to their training data.

We implement Hamiltonian neural networks (HNNs; \citealt{greydanus2019hnn,sanchezgonzalez2019hamiltonian}) with scalar-based MLPs for this learning task. In particular, we have a set of scalar inputs $\mathcal{S}=\{m_1,m_2,k_{s1},k_{s2},L_1,L_2\}$ and a set of vector inputs $\mathcal{V} = \{\mathbf{g}, \mathbf{p}_1(0),\mathbf{p}_2(0),\mathbf{q}_1(0) ,\mathbf{q}_2(0)-\mathbf{q}_1(0)\}$. We construct the dimensional scalars (baseline) and dimensionless scalars based on these two sets of inputs. 

The dimensional scalar inputs to the baseline MLPs include 32 scalars: 
(i) scalar inputs $\mathcal{S}$, as well as their inverses $\{1/a:a\in\mathcal{S}\}$;
(ii) inner products of the vector inputs $\{\mathbf{u}^\top\mathbf{v}:\mathbf{u},\mathbf{v}\in\mathcal{V}\}$, as well as their magnitudes $\{|\mathbf{u}|:\mathbf{u}\in\mathcal{V}\}$.  

The dimensionless scalar inputs are the following 32 scalars: (i) $m_1/m_2$, $k_{s1}/k_{s2}$, $L_1/L_2$ and their inverses;
(ii) we divide each vector input by its magnitude before we compute the inner products, which gives a set of dimensionless scalars $\{(\mathbf{u}^\top \mathbf{v})/(|\mathbf{u}||\mathbf{v}|):\mathbf{u},\mathbf{v}\in\mathcal{V}\}$;
(iii) we also consider dimensionless rational scalar monomials $(m_i\,|\mathbf{g}|)/(k_{si}\,L_i)$, $(k_{s_i}\,L_i)/(m_i\,|\mathbf{g}|)$, $|\mathbf{q}_i(0)|/L_i$, $|\mathbf{q}_i(0)|^2/L_i^2$, $|\mathbf{p}_i(0)| /( \sqrt{m_i\,k_{si}}\,L_i)$, $|\mathbf{p}_i(0)|^2 /(m_i\,k_{si}\,L_i^2)$, $i=1,2$. 
We use dimensionless scalars as inputs to the MLPs, which makes the outputs of the MLPs also dimensionless. 
The decoder then scales the outputs to restore the hamiltonian $H$ units of $\kg\,\m^2\,\s^{-2}$. At this stage we employ the following 26 scaling factors: $k_{sr}\,L_i\,L_j$, $m_i\,L_j\,|\mathbf{g}|$, $m_i\,\mathbf{g}^\top \mathbf{q}_j(0)$, $(\mathbf{p}_i(0)^\top \mathbf{p}_j(0))/m_r$, $k_{sr}\,\mathbf{q}_i(0)^\top \mathbf{q}_j(0)$, $i,j,r\in\{1,2\}$, all of which have the units of $\kg\,\m^2\,\s^{-2}$. 

The dimensional scalars-based and the dimensionless scalars-based MLPs both have equal numbers of model parameters, and are trained with the same set of hyper-parameters (number of training epochs, learning rate, etc.).

\begin{figure}[tp]
    \centering
    \includegraphics[width=0.85\textwidth]{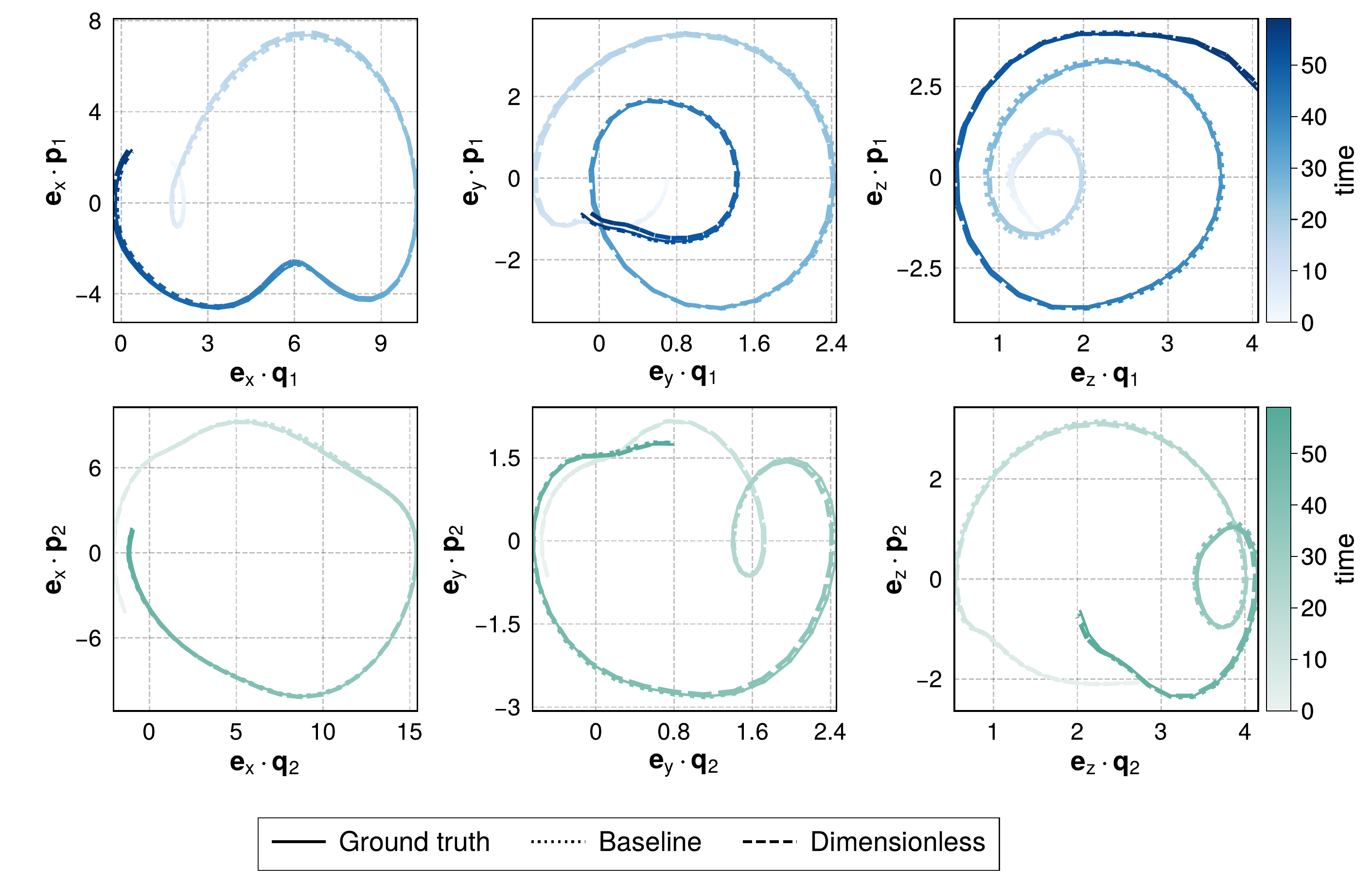}\\[3ex]
    \includegraphics[width=0.85\textwidth]{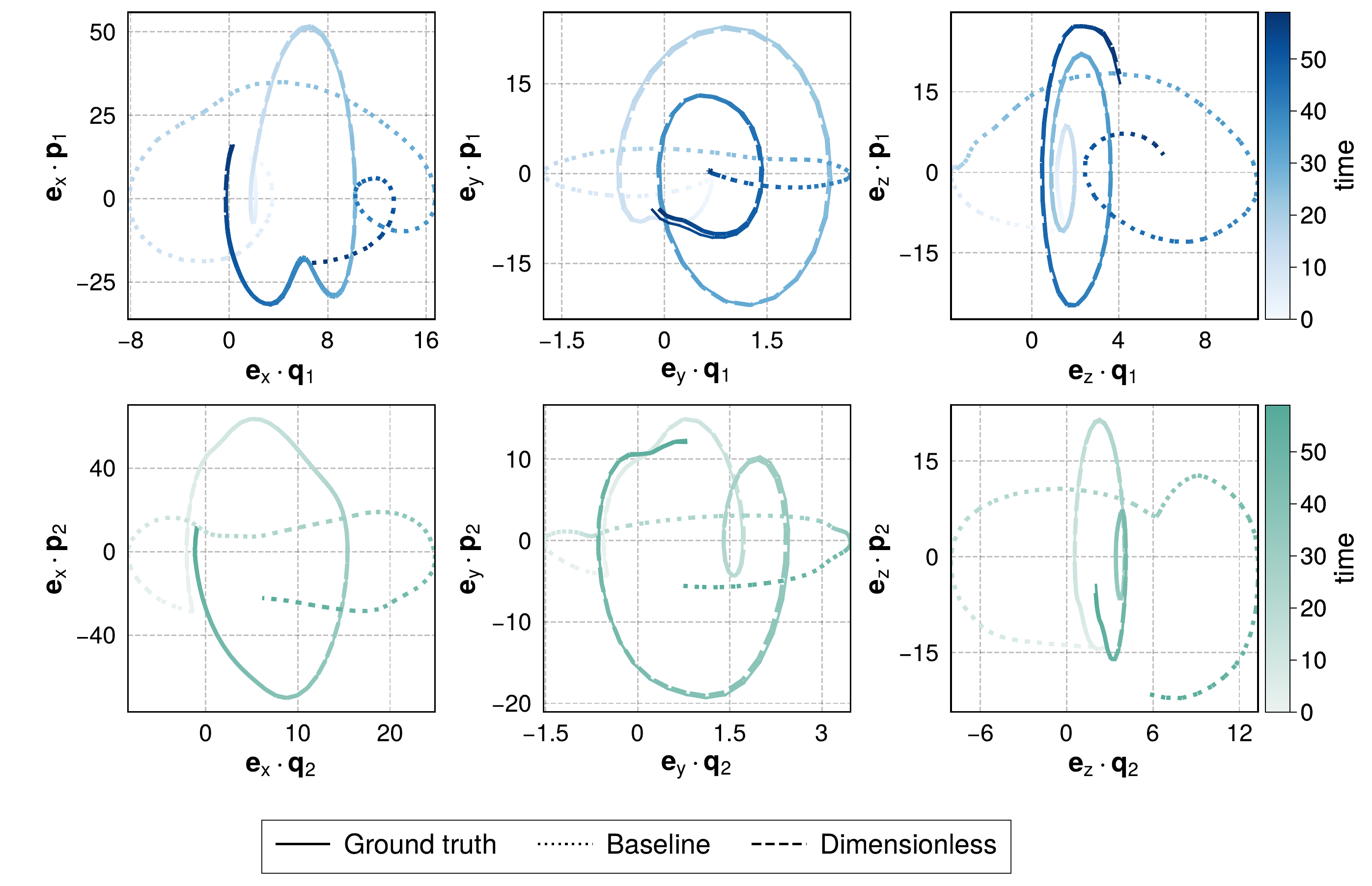} 
    \caption{Ground truth and predictions of mass 1 (top) and 2 (bottom) in the phase space w.r.t. each dimension. \textbf{Top 6 panels}: Results from Experiment~1, where the test data are generated from the same distribution as those used for training. Here the dimensional scalar based MLPs exhibit slightly more accurate predictions for longer time scales. \textbf{Bottom 6 panels}: Results from Experiment~2, where we use the same test data in Experiment~1 but each with its inputs that have units of $\kg$ randomly scaled by a factor generated from $\Unif(3,7)$. Here the dimensionless scalar based MLP is able to provide comparable performance to Experiment~1, while using the dimensional scalars gives much worse predictions.}
    \label{fig:double_pendulum}
\end{figure}
\begin{table}[t!]
    \centering
    \begin{tabular}{L{2.25cm} C{2.5cm} C{2.5cm} C{2.5cm} }
         \toprule
         Scalar-based MLPs   & Experiment~1 & Experiment~2 & Experiment~3\\
         \midrule
         Baseline   & $.0055\pm .0030$    & $.3669\pm .0050$  & $.1885\pm	.0031$\\
         \textbf{Dimensionless} & $.0061\pm .0024$ & $.0089\pm .0034$ & $.0435 \pm .0047$\\
         \bottomrule
    \end{tabular}
    \caption{Geometric mean (standard deviation computed over 10 trials) of state relative errors of the springy pendulum over $T=60$. Results are shown for the dimensional vs dimensionless scalar-based Hamiltonian Neural Networks (implemented as an MLP) on three different test sets. Test data used in Experiment~1 are generated from the same distribution as the training dataset; test data used in Experiment~2 using the same test data in Experiment~1 but each with its inputs that have units of $\kg$ randomly scaled by a factor generated from $\Unif(3,7)$; test data used in Experiment~3 has mass $m$, scalar spring constant $k_s$ and natural spring length $L$ generated from a different distribution.}
    \label{tbl:double_pendulumn}
\end{table} 

The prediction error (or state relative error) at time $t$ is defined as
\begin{align}
    \text{State.RelErr}(t) =  \frac{\sqrt{(\hat{\mathbf{z}}(t)-\mathbf{z}(t))^\top (\hat{\mathbf{z}}(t)-\mathbf{z}(t))}}{\sqrt{\hat{\mathbf z}(t)^\top\hat{\mathbf z}(t)}+\sqrt{\mathbf z(t)^\top \mathbf z(t)}}\label{eq:state_relerr}.
\end{align}
\tabref{tbl:double_pendulumn} reports the average errors over $\{t_1, \ldots, t_{60}\}$. When the test data are generated from the same distribution as the training data, the dimensional scalar based MLP exhibits slightly more accurate predictions for longer time scales using the same training hyper-parameters. When we have out-of-distribution test data as in Experiment~2 and 3, the performance of both methods deteriorate as expected, but the dimensionless scalar based MLP exhibits a significantly better generalization performance. In particular, if we rescale the units as in Experiment~2, where all the quantities that have the units of $\kg$ are scaled by the same randomly generated factor, the dimensionless scalar based MLP is able to provide comparable  performance to results from Experiment~1. Actually, this could be considered to be an in-distribution test set in the space of dimensionless scalars (see \secref{sec:bias}), and thus the only reason why the error is different is because the state relative error \eqref{eq:state_relerr} is not dimensionless.
In other words, our experiments show that imposing units equivariance increases the generalization performance significantly, especially in out-of-distribution settings.

Figure~\ref{fig:double_pendulum} provides an illustration of the predicted orbits by the dimensional and the dimensionless methods in Experiment~1 and Experiment~2.

\paragraph{Emulation: Arid vegetation model}
We further explore unit equivariance informed learning inspired by a non-linear problem in ecology\footnote{See \url{https://dwh.gg/Rietkerk}.}. In semi-arid environments, banded vegetation is a characteristic feature of plant self-organization which is modulated by the quantity of water available \citep{dagbovie2014pattern}. Inverting emergent vegetation patterns as a function of environmental changes is a central problem in ecology. Towards this end, a popular approach is the Rietkerk model, a set of differential equations relating surface water $u$, water absorbed into the soil $w$, and vegetation density $v$ \citep{rietkerk}.
\begin{table}[t]
    \centering
    \begin{tabular}{ c |c| c |c }
          & description & default & units \\
         \hline
         $R$ & rainfall  & $0.375 $  & $\, \l \, \d^{-1}\, \m^{-2}$\\
         $\alpha$ & infiltration rate & 0.2 & $\d^{-1}$ \\
         $k_2$ & saturation const. & 5 & $\g \, \m^{-2}$ \\
         $W_0$ & water infiltration const. & 0.1 & --- \\
         $D_u$ & surface water diffusion & 100 & $\d^{-1}\, \m^{2}$ \\
         $g_m$ & water uptake & 0.05 & $\l\, \g^{-1}\, \d^{-1}$ \\
         $k_1$ & water uptake constant & 5 & $\l\, \m^{-2}$ \\
         $\delta_w$ & soil water loss & 0.2 & $ \d^{-1}$ \\
         $D_w$ & soil water diffusion & 0.1 & $  \d^{-1}\, \m^2$ \\
         $c$ & water to biomass & 20 & $  \l^{-1}\, \g$ \\
         $\delta_v$ & vegetation loss & 0.25 & $ \d^{-1}$ \\
         $D_v$ & vegetation diffusion & 0.1 & $  \d^{-1}\, \m^2$ \\[2ex]
         $T$ & total integration time & 200 & $  \d$
         \\
        $\delta t$ & integration time step & 0.005 & $  \d$
        \\
        $L$ & integration patch length & 200 & $  \m$
        \\
        $\delta l$ & spatial step size & 2 & $\m$ \\
        \hline
    \end{tabular}
\begin{tabular}{c}
Dimensionless features\\
\hline
$c \, \alpha^{-1}\,g_m$ \\
$ R^{-1} \alpha \, k_1 $\\
$R^{-1}c^{-1} \alpha\, k_2 $ \\
$\alpha^{-1} \delta_w $\\
$\alpha^{-1} \delta_v$ \\
$W_0$ \\
$\alpha^{-1} D_v \, L^{-2}$ \\
$\alpha^{-1}D_u \, L^{-2}$ \\
$ \alpha\, T$ \\
$ \alpha\, \delta t$ \\
$ \alpha^{-1}D_w \, L^{-2} $\\
$ L^{-1}\delta l$ \\
\hline
\end{tabular}
    \caption{(Right) Parameters in the Rietkerk model and their units \citep{rietkerk}. The bottom four parameters are parameters of the integration. (Left) Basis of dimensionless features found by our method.}
    \label{table.params}
\end{table}
These differential equations are
\begin{align}\label{eq:rietkerk}
    \frac{\dd u}{\dd t} &= R - \alpha\,\frac{v + k_2\,W_0}{v + k_2}\,u + D_u\,\nabla^2 u\nonumber\\
    \frac{\dd w}{\dd t} &= \alpha\,\frac{v + k_2\,W_0}{v + k_2}\,u - g_m\,\frac{v\,w}{k_1 + w} - \delta_w\,w + D_w\,\nabla^2 w\nonumber\\
    \frac{\dd v}{\dd t} &= c\,g_m\,\frac{v\,w}{k_1 + w} - \delta_v\,v + D_v\,\nabla^2 v,
\end{align}
where $u,w,v$ are all functions of both two-dimensional spatial coordinates and time $t$, and the $\nabla^2$ operator is the scalar second derivative operator (Laplacian) with respect to position.
In detail, $u$ denotes the surface water density (units of $\mm = \l\,\m^{-2}$), $v$ is the soil water content (same units as $u$), and $v$ is the vegetation density (units of $\g\,\m^{-2})$. Further, the time derivative operator has units of $\d^{-1}$, the Laplacian operator has units of $\m^{-2}$, and the units of the other quantities ($R$, $w_0$, $g_m$, and so on) can be inferred from the equations in \eqref{eq:rietkerk}.
Here the natural base units are $(\l, \g, \d, \m)$.
Note that as there is a conversion $1000\,\l=1\,\m^3$,  we could, in principle, reduce the base units by one. However, there is no direct communication between water volume and distance across the surface, so these units can be kept separate.
In general, the units equivariance is more powerful when there are more independent base units, which leads to substantial design decisions for the investigator. The Rietkerk model is determined by a set of dimensional parameters described in Table~\ref{table.params}, and the initial conditions $u_0, v_0, w_0$. We consider random initial conditions, and random choice of parameters, uniformly sampled between 0.5 and 1.5 times the default value. For each choice of parameters we use finite differences to estimate the derivatives and Laplacian, and integrate the Rietkerk model using Euler's method with time step $0.005\, \d$, in a $200\,\m \times 200\, \m$ grid, with $2\,\m$ pixel spacing.

We consider the task of predicting, from initial conditions, the average vegetation density after $200$ days (the empirical steady state solution of \eqref{eq:rietkerk} at default parameters). We produce a training set of 1000 initial configurations and a test set of 100 configurations. A significant portion of these simulations ended up on total vegetation death at finite time. We didn't consider these examples for the regression task -- extending our results to classification is a future direction.  
\begin{figure}[tp]
    \centering
    \includegraphics[height=0.21\textwidth]{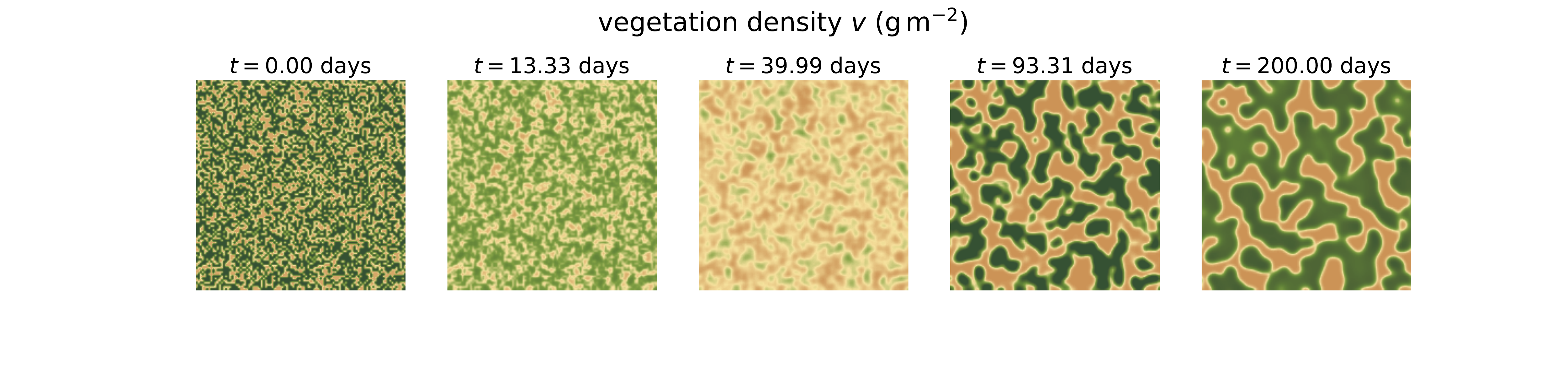}
    \includegraphics[height=0.17\textwidth]{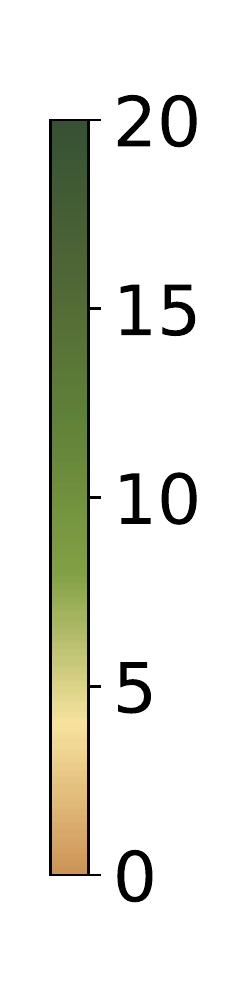}
    \\[4ex]
    \includegraphics[width=0.5\textwidth]{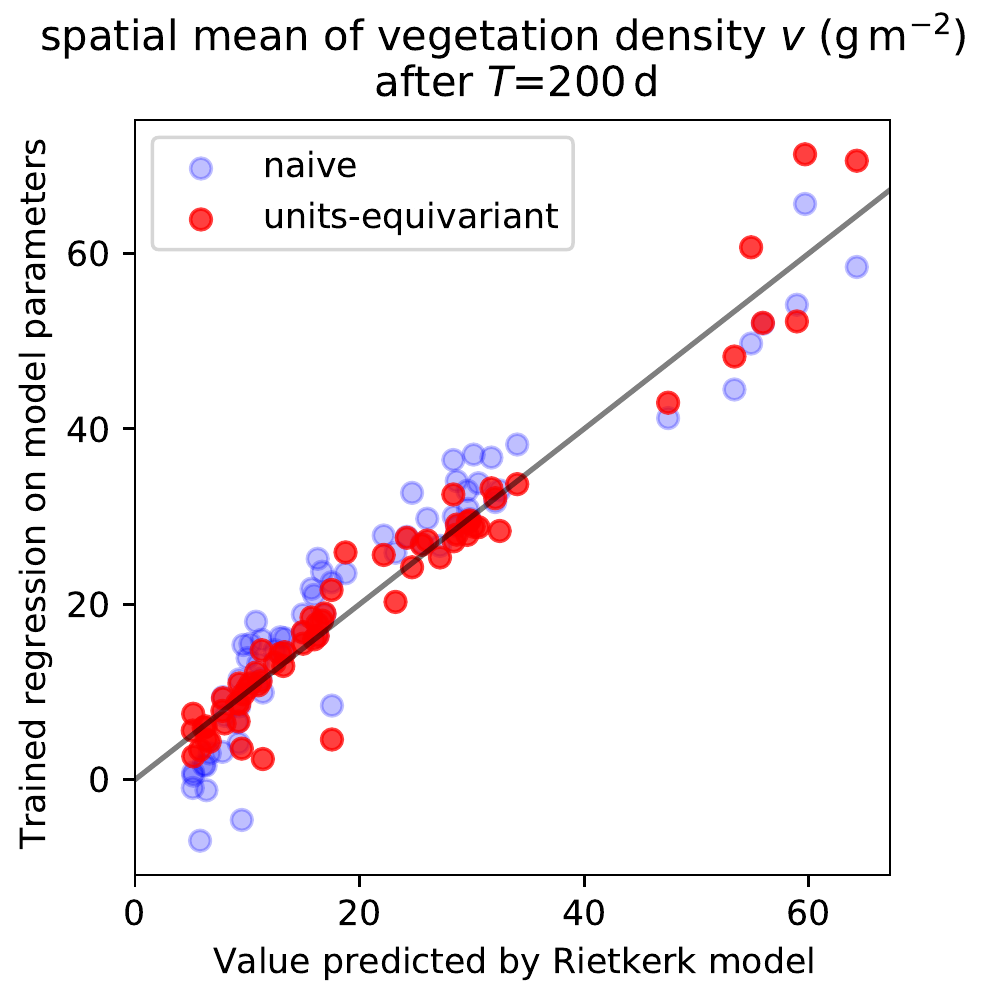}
    \caption{(Top) The evolution of vegetation density from random initialization according to a Rietkerk model. The model's parameters are given in Table~\ref{table.params}. (Bottom) We consider 1000 random initializations and evolutions according to random parameters sampled from a uniform distributions supported in $[0.5x , 1.5x]$ where $x$ are the baseline Rietkerk parameters from Table~\ref{table.params} (the integration parameters remain fixed). Our regression task is to predict the spatial mean vegetation after 200 days as a function of the Rietkerk parameters. The light blue dots show the (naive) regression vs Rietkerk model for a linear regression on the model parameters, its inverses and the constant 1 (33 features in total) on a held out test set. The dark red dots correspond to a linear regression using a basis of dimensionless features obtained with our (units-equivariant) method, their inverses and the constant 1 (25 features in total) in the same test set. The naive regression has a test MSE of  $26.3 \, \g^2 \, \m^{-4}$ whereas the units-equivariant regression has a MSE of $12.6 \, \g^2 \, \m^{-4}$. The Pearson correlation of the prediction and the target value is 0.94 for the naive regression and 0.97 for the units-equivariant regression. }
    \label{fig:vegetation}
\end{figure}
We perform two forms of linear regression, a baseline regression and a unitless regression. The baseline regression uses $33$ features: the dimensional parameters, their inverses, and the dimensionless constant 1 which describes affine linear functions. The dimensionless linear regression uses the method described in \secref{sec:approach}. It uses the Smith normal form to construct a basis of 12 dimensionless features, and it uses them, their inverses and the constant 1, obtaining 25 regression features. The results show that the dimensionless regression has significantly better performance in Figure \ref{fig:vegetation}.

Our toy model explores the impact of selecting dimensionally correct features on predicting average vegetation outcomes when data are generated from a well characterized ecological model. However, other interesting symbolic regression problems remain open.
For example, is the Rietkerk model considered the most appropriate for modelling banded vegetation patterns in general? A recent approach aimed to address the related inverse problem of determining the underlying structure of a nonlinear dynamical system from data \citep{brunton2016discovering}. There, sparse regression and compressed sensing tools informed the selection of a small number of informative, non-linear terms hypothesized to explain an observed dynamics. Thus, these kinds of problems present an intriguing venue for future exploration of units equivariance as a principled way to impose additional sparsity in a non-linear feature space which could further aid methods like that of \citet{brunton2016discovering} by restricting feature selection to units-equivariant, physically informative terms.

\paragraph{Symbolic regression: The black-body radiation law}
One of the most important moments in modern physics was the introduction of the quantum-mechanical constant $h$ by Planck around 1900 \citep{planck}.
In our language, this discovery can be seen as a symbolic regression, in which Planck discovered a simple symbolic expression that accurately summarized a host of data sets on radiating bodies at different temperatures.
The dimensional constant $h$ was introduced to explain the short-wavelength part of the radiation law, but it ended up being the governing constant for all quantum phenomena; it led to a simple prediction of the spectrum of the Hydrogen atom \citep{bohr} and is the core of the Schr\"odinger equation \citep{schrodinger}; it was extremely important in the history of physics.

The black-body radiation $B_\lambda(\lambda;T)$ from a perfectly radiating and absorbing (black) thermal source at temperature $T$ is properly measured in intensity units, which are (or can be) energy per time per wavelength per area per solid angle.
Because solid angles are dimensionless, this translates to SI units of $\J\,\m^{-3}\,\s^{-1}$.
The problem Planck faced was a set of measurements (labels) $B_\lambda(\lambda;T)$ at many wavelengths $\lambda$ for bodies at multiple temperatures $T$.
The input features $\lambda,T,c,\kB$ and output labels $B_\lambda(\lambda;T)$ of the problem are summarized in \tabref{tab:planck}, along with their units in the SI base unit system of $\kg,\m,\s,\K$.

\begin{table}[t]
    \centering
    \begin{tabular}{r|l|c|l}
    & \emph{description} & \emph{units} & \emph{comment} \\ \hline
    & & & \\[-1ex]
    $B_\lambda(\lambda;T)$ & intensity & $\kg\,\m^{-1}\,\s^{-3}$ & regression label \\
    & & & \\[-1ex]
    $\lambda$ & wavelength & $\m$ & variable feature\\
    $T$ & temperature & $\K$ & variable feature\\
    $c$ & speed of light & $\m\,\s^{-1}$ & fundamental constant\\
    $\kB$ & Boltzmann's constant & $\kg\,\m^{2}\,\s^{-2}\,\K^{-1}$ & fundamental constant\\
    \end{tabular}
    \caption{Labels and features in Planck's black-body radiation problem and their units.}
    \label{tab:planck}
\end{table}

In terms of the language illustrated in \figref{fig:approach}, the decoder $g_{\mathbf{x},\mathbf{v}}$ involves multiplying the dimensionless output of a dimensionless regression by a dimensional quantity with the same units as the labels $B_\lambda(\lambda;T)$.
The only possible dimensional quantity that can be made out of the features that matches the dimensions of the labels is
\begin{equation}
    \frac{c}{\lambda^4}\,\kB\,T ~,
\end{equation}
which has units of intensity.
The featurizer $\phi$ makes all possible dimensionless quantities out of the inputs.
But wait, there are no (non-trivial) dimensionless features possible!
In a dimensionless regression, the \emph{only} available input feature is the dimensionless constant \emph{unity}.
That is, in our approach, the only possible outcome of the regression in this case is
\begin{eqnarray}\label{eq:noh}
    B_\lambda(\lambda;T) = C\,\frac{c}{\lambda^4}\,\kB\,T ~,
\end{eqnarray}
where $C$ is a universal constant.
This is a classical dimensional-analysis result.

There are two comments to make here.
The first is that this form \eqref{eq:noh} is not a good fit to the data!
Thus the method we are proposing here fails.
The explanation for this failure is that there is a dimensionless constant, $h$ (now known as Planck's constant) that is missing from our formulation in \tabref{tab:planck}.
The second is that this form \eqref{eq:noh} is a perfect fit to the data \emph{at long wavelengths}.
That is, at long wavelengths, where quantum occupation numbers are high, the problem behaves classically, and the data are extremely well explained by \eqref{eq:noh}, with $C=2$.
This result (with $C=2$) is called the Rayleigh–-Jeans law.
If the reader is interested in the history of physics, the Rayleigh--Jeans law is the lynchpin of the ultraviolet catastrophe, which is a paradox of classical statistical mechanics, resolved by quantization.

What Planck discovered or realized is that the data could only be explained with the introduction of a new dimensional universal constant.
He had choices for the dimensions of this constant, but he set it to have dimensions of energy times time.
Planck's symbolic regression led to the complete expression
\begin{eqnarray}
    B_\lambda(\lambda;T) = \frac{2\,h\,c^2}{\lambda^5}\,\frac{1}{\exp(\frac{h\,c}{\lambda\,\kB\,T})-1}~,
\end{eqnarray}
which reduces to \eqref{eq:noh} with $C=2$ in the limit $\lambda\rightarrow\infty$.
This result required the introduction of the dimensional constant $h$, resolved the ultraviolet catastrophe, and seeded the discovery of quantum mechanics.
In the approach advocated in this work, we have no way to learn or discover missing dimensional constants.
That is a limitation of our approaches, and motivates future work.

\section{Discussion}

In the above, we defined units equivariance for machine learning, with a focus on regression and complex functions.
A function obeying this equivariance obeys the exact scalings that are required by the rules of dimensional analysis. These scalings must be obeyed by any theory or function in use in the natural sciences.

We developed a simple framework for implementing units equivariance into regression problems.
This framework puts burdens on the investigator---burdens of having consistent units for all inputs, and also a comprehensive list of dimensional constants---but is otherwise lightweight in terms of modifying existing regression methods.
We did not consider the important problems of learning dimensions, or discovery of missing dimensionless inputs, but these are worthy extensions of what we looked at here.

We argued that imposing units equivariance must improve the bias and variance of regression methods, both because it incorporates correct information, and also because it reduces model capacity at fixed complexity, often by an enormous factor.
The equivariance also enables out-of-sample generalization, because a test set that doesn't overlap a training set in dimensional inputs will often significantly overlap in dimensionless combinations of those inputs.
We illustrated these effects empirically with a few simple experiments.

Units equivariance applies to all functions in the natural sciences.
It won't be useful everywhere.
In particular, it is most useful when there are many independent units at play, and the full panoply of physical constants is known.
This is not true, say, for standard image-recognition tasks, for which all the inputs have the same units (intensity in image pixels) and the physical quantities (involved in the identification of pandas and kittens, say) are not known.
It is also not true in natural-science problems where there might be unknown physical constants or physical laws at play.
The discovery of physical laws is often the discovery of dimensional physical constants, as our black-body radiation law example problem (\secref{sec:experiments}) illustrates.

However, we are very optimistic about the usefulness of units equivariance in problems of emulation and symbolic regression.
In these settings, all symmetries are exact, and often all inputs (including all fundamental constants) are known (and have known units).
In particular, some of the cleanest physics problems might be in the area of the growth of structure in the Universe, where there are very few dimensioned quantities and the physics is dominated by one force (gravity).
These problems are of great interest at the present day, and have attracted very promising work with machine learning methods (for example, \citealt{he, berger, kodi, troster}).

\paragraph{Acknowledgments:}
It is a pleasure to thank
  Timothy Carson (Google),
  Miles Cranmer (Princeton),
  Samory Kpotufe (Columbia),
  Sanjoy Mahajan (Olin College),
  Bernhard Sch\"olkopf (MPI-IS),
  Kate Storey-Fisher (NYU), and
  Wenda Zhou (NYU and Flatiron Institute)
for valuable discussions.
We also thank the action editor Jean-Philippe Vert
and the anonymous reviewers for constructive feedback that helped us improve the manuscript significantly.
SV was partially supported by
ONR N00014-22-1-2126, 
the NSF–Simons Research Collaboration on the Mathematical and Scientific Foundations of Deep Learning
(MoDL) (NSF DMS 2031985),
NSF CISE 2212457,
an AI2AI Amazon research award, 
and the TRIPODS Institute for the Foundations of Graph and Deep Learning at Johns Hopkins University.

\bibliography{units}

\clearpage\appendix

\section{Generalization improvements}\label{sec:analysis}

We first explain ideas from \citet{elesedy2021provably} developed in the context of invariant and equivariant functions with respect to actions by compact groups.

Given $X\subseteq \mathbb R^d$ and $G$ a compact group acting on $X$, we fix a $G$-invariant measure $\mu$. We consider the Hilbert space of functions $\mathcal H=\mathscr L_2(X, \mu)$. The action of $G$ on $X$ induces a natural action on $\mathcal H$ via the formula
\[
[\lambda\cdot f](x) := f(\lambda^{-1}\cdot x)
\]
for $x\in X$ and $\lambda \in G$. We split $\mathcal H$ into two orthogonal components, the closed \emph{ground-truth} $G$-invariant subspace $\bar{ \mathcal H}$ consisting of the functions $f\in\mathcal H$ satisfying $\lambda\cdot f = f$ for all $\lambda\in G$, and its orthogonal complement $\mathcal H^\perp$, so that $\mathcal H = \bar{ \mathcal H }\oplus \mathcal H^\perp$. Using that $\mu$ is $G$-invariant, it is noted \citep{elesedy2021provably} that the orthogonal projection onto $\bar{ \mathcal H}$ coincides with averaging along the $G$-orbit, also known as the Reynolds operator:
\begin{equation} \label{eq.projection}
    \mathcal O f(x) = \int_{G} f(\lambda \cdot x) d\lambda,
\end{equation}
where $\lambda$ is the normalized Haar measure of the group. The proof from \cite{elesedy2021provably} that orthogonal projection to $\bar{\mathcal H}$ coincides with $\mathcal O$ is reproduced below in \eqref{eq.sa1}-\eqref{eq.sa4}, and an alternative proof follows as well.

The Reynolds operator has good algebraic properties. It can be alternatively characterized as the unique $G$-invariant projection $\mathcal H \rightarrow \bar {\mathcal H}$, i.e., the unique linear map $\mathcal O: \mathcal H \rightarrow \bar {\mathcal H}$ that restricts to the identity on $\bar {\mathcal H}$ and satisfies 
\begin{equation}
\mathcal O (\lambda \cdot f) = \mathcal O f
\end{equation}
for $\lambda \in G$ and $f\in \mathcal H$. Yet another characterization is that it restricts to the identity on any subspace consisting of invariants, and to zero on any $G$-stable closed subspace not containing nontrivial invariants. The Hilbert space $\mathcal H$ contains the algebra of compactly-supported continuous functions $C_c(X)$ as a dense subspace, and the Reynolds operator has the property that its restriction to this algebra commutes with multiplication by invariant such functions, i.e., it satisfies
\begin{equation}
    \mathcal O (fh) = f\mathcal O h
\end{equation}
for all $f\in C_c(X)^G$ (the subalgebra of invariant compactly-supported continuous functions) and $h\in C_c(X)$. In other words, $\mathcal O$ restricts to a $C_c(X)^G$-module projection $C_c(X)\rightarrow C_c(X)^G$.

Now we verify the observation from \cite{elesedy2021provably} that $\mathcal O$ is the orthogonal projection in $\mathcal H$ onto the space of invariants; equivalently, given $f\in \mathcal H$, we have
$\arg\min_{h \in \bar {\mathcal H}} \| h - f\|_{\mu}^2 =\mathcal O(f)$. 
In order to show this, it suffices to observe that $\mathcal 
O$ is self-adjoint with respect to the inner product in $\mathscr L_2(X, \mu)$:
\begin{eqnarray}
 \langle \mathcal O f, h \rangle_\mu &=& \int_X \left\langle \int_G f(\lambda \cdot x) d\lambda , h(x) \right \rangle d\mu(x) \label{eq.sa1}\\
&=& \int_X \int_G \langle  f(\lambda \cdot x) , h(x) \rangle  d\lambda \, d\mu(x) \\
&=& \int_X \int_G \langle  f( x) , h(\lambda^{-1}\cdot x) \rangle  d\lambda \, d\mu(x) \label{eq.ginverse}\\
&=& \langle f,  \mathcal O h \rangle_\mu. \label{eq.sa4}
\end{eqnarray}
Note that \eqref{eq.ginverse} holds due to $\mu$ being $G$-invariant. An alternative, more conceptual way to see that $\mathcal O$ is orthogonal projection to $\bar{\mathcal H}$ under the assumption that $\mu$ is $G$-invariant is as follows. Because $\mu$ is $G$-invariant, the inner product on $\mathcal H = \mathscr L_2(X,\mu)$ is also $G$-invariant. Thus $G$ acts by unitary transformations on $\mathcal H$. This implies that every $G$-invariant element of $\mathcal H$ is orthogonal to every $G$-stable closed subspace not containing any nontrivial $G$-invariants. This is because if $f\in \mathcal H$ is invariant and $h\in V\subset\mathcal{H}$ where $V$ is a closed, $G$-stable subspace containing no nontrivial invariants, then $\langle f,h\rangle = \int_G\langle \lambda\cdot f, \lambda \cdot h\rangle d\lambda = \langle f, \mathcal O h\rangle = \langle f,0\rangle =0$. The integral is with respect to normalized Haar measure on $G$. The first equality is the fact that $\lambda$ is unitary, the second is because $f$ is invariant so the integral can be pushed into the inner product, and the third is because $\mathcal O h$ is both invariant and in $V$, therefore trivial. As discussed above, $\mathcal O$ acts as the identity on invariants and annihilates $G$-stable closed subspaces containing no nontrivial invariants (and this also follows from the calculation just performed). Since the former are orthogonal to the latter, this means $\mathcal{O}$ is the orthogonal projection onto the subspace of invariants in $\mathcal H$.

Now, given $f^*:X \to \mathbb R^k$, 
$f^*\in \bar{ \mathcal H}$, \citet{elesedy2021provably} consider data $y=f^*(x)+\xi$ where $\xi$ is sampled from a zero-mean, finite variance distribution in $\mathbb R^k$. 
The risk of a function $f$ is the expected value of the prediction error
\begin{equation} \label{eq.risk}
     \mathcal R(f) = \mathbb E_{x\sim \mu} \|y - f(x)\|_2^2
\end{equation}
and given two functions $f$ and $f'$ the generalization gap is
\begin{equation}
    \Delta(f, f')= \mathcal R(f)- \mathcal R(f'). 
\end{equation}
In \citet{elesedy2021provably} a regression problem is considered in which the regression is performed on a subspace $U \subset \mathcal H$ that is closed under \eqref{eq.projection}. Then $U$ can be decomposed into closed subspaces $U = \bar U \oplus U^\perp$ where $\bar U \subset \bar{ \mathcal H}$ and $U^\perp \subset \mathcal H^\perp$. Given a function $f\in U$, let $\bar f:=\Pi_{\bar U}f$ and $f^\perp:=\Pi_{U^\perp}f$ the respective orthogonal projections of $f$. 
Note than under the present hypothesis $\bar f := \mathcal O f$ and $f^\perp := f - \mathcal O f$ since $\mathcal O$ restricted to $U$ is  the orthogonal projection to $\bar U$.

The goal is to compute $\Delta(f, \bar f)$, namely what is the excess risk of doing the regression on the function space $U$ instead of restricting to the invariant subspace $\bar U$, which corresponds to the ground truth. A simple computation shows that if $x$ is sampled from $\mu$, then
\begin{equation} \label{eq.proj}
    \Delta(f, \bar f) = \|f^\perp\|^2_\mu;
\end{equation}
this in particular shows that there is always a benefit to restricting to the ground truth subspace. For instance if the target is a group invariant function, and we do a regression in a space of polynomials, we might as well restrict the regression to group invariant polynomials, since the generalization error will be strictly smaller.

The rest of the analysis in \citet{elesedy2021provably} focuses on computing $\|f^\perp\|^2_\mu$ for data models, for instance, assuming $U$ is the space of linear functions (they discuss both over and under parameterized), and the input is Gaussian with mean zero and identity covariance (assuming the spherical Gaussian distribution is $G$-invariant). A slightly more general formulation allows them to express the same ideas for equivariance. This work has been extended to kernel regressions \citep{elesedy_kernel}.

\paragraph{Reynolds projection onto scaling invariant functions via Weyl's unitarian trick}

Let $\mathcal X = (\mathbb R, \mathbb Z^k)^d$ where each of the $d$ features $x_i$ has base units exponents $\mathbf u_i = (u_{i1}, \ldots u_{ik})\in \mathbb Z^k$ expressed in terms of $k$ base units. 
The Buckingham Pi theorem discussed in \secref{sec:approach} shows that units-equivariant functions are in a one-to-one correspondence with functions of (finitely many) dimensionless features (constructed as products of powers of input features). 
This characterization also implies that the dimensionless functions are exactly the functions that are invariant with respect to unit rescalings. For each base units, represented here by the index $j\in \{1,\ldots, k\}$, we consider $u_{ij}$, its exponent in each feature $x_i$. Then $f$ is scaling invariant (i.e. dimensionless\footnote{We assume the output is dimensionless for simplicity, without loss of generality a dimensioned output function $F^*:\mathcal Z \to \mathcal X_{\mathbf v}$ can be obtained as $F^*(\mathbf x)=g_{\mathbf x,\mathbf u}(f^*(\mathbf x))$ where $g_{\mathbf x,\mathbf u}$ is fixed.}) if and only if for all $g \in \mathbb R_{>0}$ and all $j\in \{1,\ldots, k\}$ we have
\begin{equation}
    f(g^{u_{1j}} x_1, \ldots, g^{u_{dj}} x_d ) = f(x_1, \ldots, x_d);
\end{equation}  
this property is also known as self-similarity \citep{barenblatt_1996}.
The rescalings can be seen as an action by the appropriate renormalization group, in this case $G=(\mathbb R_{>0})^k$, defined as
\begin{equation}\label{eq.group-action}
(g_1, \ldots, g_k)\cdot (x_1, \ldots, x_d) = \left( (\prod_{j=1}^kg_{j}^{-u_{1j}}) x_1, \ldots, (\prod_{j=1}^kg_{j}^{-u_{dj}}) x_d   \right).
\end{equation}
Since the group is not compact, the results of \citet{elesedy2021provably} are not directly applicable. However, it is closely related to the reductive real algebraic group $(\mathbb R^\times)^k$, so some of the concepts can be generalized by using Weyl's unitarian trick. (This will require us to work in complex space.) However, the key property from \citet{elesedy2021provably}, that the Reynolds projection coincides with the orthogonal projection with respect to the measure, will not hold in real space.


We consider $U$ a space of real analytic functions (for instance, rational monomials). We can define a Reynolds projection to the $G$-invariant functions $\bar U$, analogous to \eqref{eq.projection}, by using Weyl's unitarian trick. If $f: \mathbb R^d \to \mathbb R$ is a real analytic function it has an analytic continuation, $f_{\mathbb C}: \mathbb C^d \to \mathbb C$, such that $f_{\mathbb C}|_{\mathbb R^d} = f$. Weyl's unitarian trick is to replace the group $G = (\mathbb R_{> 0})^k$ with the torus 
$\mathbb T^k={ \{\mathbf z\in \mathbb C^k: |z_i|=1 \}}$. Both groups are Zariski-dense in the group $(\mathbb C^\times)^k$---in other words, any polynomial that vanishes identically on either $G$ or $\mathbb T^k$ actually vanishes identically on all of $(\mathbb C^\times)^k$. (For background on Weyl's trick, see \cite[pp.~129--131]{fulton2013representation}; for more on the Zariski topology and Zariski-denseness, see \cite[pp.~22--24]{shafarevich}.)  It follows, because the group action \eqref{eq.group-action} is described by rational functions, that all three groups have the same invariants. On the other hand, the torus $\mathbb T^k$ is compact, so we can average over it to obtain a projection
\begin{equation}
    \mathcal Q f(x_1, \ldots, x_d) =   \int_{\mathbb{T}^k} f_{\mathbb C} \big( (z_1, \ldots, z_k) \cdot ( x_1, \ldots, x_d )\big) d\lambda_{\mathbb C}(\mathbf z), \label{eq.proj.complex}
\end{equation}
where the Haar measure $\lambda_{\mathbb C}$ coincides with the (normalized) Lebesgue measure on the torus---namely, the Lebesgue measure scaled by $\frac{1}{(2\pi)^k}$.

In order to explain how the projection $\mathcal Q$ behaves, we first note that the rational monomials define characters for the group action. Namely,
\begin{equation}
\mathbf x^{\mathbf a} = \prod_{i=1}^d x_i^{a_i}, \quad (z_1,\ldots z_k)\cdot \mathbf x^{\mathbf a} = (\prod_{j=1}^k z_j^{\sum_{s=1}^d {u_{sj}}a_j})\mathbf x^{\mathbf a},
\end{equation}
therefore 
\begin{align}
\chi_{\mathbf x^{\mathbf a}}: \mathbb (\mathbb C_{>0})^k &\to \mathbb C_{>0} \\
(z_1, \ldots, z_k) &\mapsto \prod_{j=1}^k z_j^{\sum_{s=1}^d {u_{sj}}a_j} 
\end{align}
is a continuous group homomorphism (same for the real characters). 
Therefore, the rational monomials are eigenvectors of $\mathcal Q$:
\begin{equation}
\mathcal Q (\mathbf x^{\mathbf a}) = \int_{\mathbb T^k} 
\mathbf z \cdot \mathbf x^{\mathbf a} d\lambda_{\mathbb C}(\mathbf z)
= 
\int_{\mathbb T^k} \chi_{\mathbf x^{\mathbf a}} (\mathbf z) \, \mathbf x^{\mathbf a} d\lambda_{\mathbb C}(\mathbf z) 
= 
 \mathbf x^{\mathbf a} \int_{\mathbb T^k} \chi_{\mathbf x^{\mathbf a}} (\mathbf z) d\lambda_{\mathbb C}(\mathbf z).
\end{equation}
A standard computation shows that if $\chi$ is a character of a compact group with Haar measure $\lambda$, then $\int_G \chi(g) d\lambda = \lambda(G)$ if $\chi$ is trivial, and 0 otherwise. In particular this shows
\begin{equation}
\mathcal Q(\mathbf x^{\mathbf a}) = \left \{
\begin{matrix} 
\mathbf x^{\mathbf a} & \text{if } \sum_{s=1}^d u_{sj} a_j =0 \text{ for all } j=1,\ldots, k
\\
0 & \text{otherwise.}
\end{matrix}
\right. \label{eq.proj_monomials}
\end{equation}
 In other words, comparing \eqref{eq.proj_monomials} with \eqref{b.pi}, a rational monomial is either invariant under the group action (i.e. dimensionless), or it is in the kernel of $\mathcal Q$. Thus $\mathcal Q$ is a Reynolds operator for the group of scalings.

\paragraph{Generalization gap for (complex) units equivariant regressions} In order to use the results from \citet{elesedy2021provably} it is not enough to have a projection $\mathcal Q$. We need $\mathcal Q$ to be an orthogonal projection in an $\mathscr L_2$ space.
To this end, we consider a space of functions of the original input features, where the units equivariant functions are a linear subspace. The characterization from dimensional analysis discussed above suggests to focus on rational monomials. Unfortunately the rational monomials may have poles when features are zero, so in order to define the inner product we will restrict the measure $\mu$ to have bounded support which does not include zero. In particular we can consider $X^d=([-b,-a]\cup [a,b])^d$, and $\mu$ the standard (Lebesgue) measure in $\mathbb R^d$ restricted to $X^d$ and 0 outside $X$. We let $\mathcal H = \mathscr L_2( \mathbb R^d, \mu)$. We note that $\mu$ is not scaling invariant, and indeed, no compactly-supported measure is scaling-invariant. Relatedly, $\mathcal Q$ is not an orthogonal projection in $\mathscr L_2( \mathbb R^d, \mu)$ for any real measure $\mu$. But we will salvage what we can.

Since we extended the class of functions to a complex domain to use Weyl's trick, we also need to extend the measure $\mu$ to $\mu_{\mathbb C^d}$. We consider the Lebesgue measure in $\mathbb C^d$ with support $(X_{\mathbb C})^d$ where $X_{\mathbb C}= \{z\in \mathbb C:  a<|z|<b \}$. Now $\mathcal H_{\mathbb C} = \mathscr L_2( \mathbb C^d, \mu_{\mathbb C^d})$. We will see that even though $\mu_{\mathbb C^d}$ is a complex analog of $\mu$, the resulting Hilbert spaces are not necessarily comparable. In particular $\mathcal Q$ is an orthogonal projection in $\mathscr L_2( \mathbb C^d, \mu_{\mathbb C^d})$, due to the fact that $\mu_{\mathbb C^d}$ is rotationally symmetric (i.e., invariant under the action by $\mathbb T^d$).

\begin{proposition}
If $U \subseteq \mathcal H_{\mathbb C}$ is a linear subspace closed under scalings, then $\mathcal Q$ defined in \eqref{eq.proj.complex} is the orthogonal projection onto the space of scaling invariant functions. In particular $U=\bar U \oplus U^{\perp}$ where $\bar U = \text{Image}(\mathcal Q)$  and $U^{\perp}=\text{kernel}(\mathcal Q)$. 
\end{proposition}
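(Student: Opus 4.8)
The plan is to recognize $\mathcal{Q}$ as the averaging operator of a unitary representation of the compact torus $\mathbb{T}^k$ on $\mathcal{H}_{\mathbb{C}}$; once the representation is shown to be unitary, the orthogonal-projection statement is a standard consequence of compact-group averaging, so essentially all the genuine work is concentrated in that one verification. Concretely, I would write $\pi(\mathbf z)f(x):=f_{\mathbb{C}}(\mathbf z^{-1}\cdot x)$, so that $\mathcal{Q}=\int_{\mathbb{T}^k}\pi(\mathbf z)\,d\lambda_{\mathbb{C}}(\mathbf z)$ as in \eqref{eq.proj.complex}, and then argue in three moves: unitarity of $\pi$, the formal properties (idempotent, self-adjoint, image $=$ fixed subspace), and the identification of that fixed subspace with the scaling-invariant functions together with the passage to $U$.

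First I would record that $\mathbb{T}^k$ acts by measure-preserving maps. Writing $\mathbf z=(z_1,\dots,z_k)$, the complexified action \eqref{eq.group-action} sends $x_i\mapsto \omega_i(\mathbf z)\,x_i$ with $\omega_i(\mathbf z)=\prod_{j=1}^k z_j^{-u_{ij}}$; since $|z_j|=1$ we have $|\omega_i(\mathbf z)|=1$, so each coordinate is merely rotated in $\mathbb{C}$. Such a coordinatewise rotation preserves each annulus $X_{\mathbb{C}}$ and preserves planar Lebesgue measure, hence $\mu_{\mathbb{C}^d}$ is $\mathbb{T}^k$-invariant and the operators $\pi(\mathbf z)$ are unitary on $\mathcal{H}_{\mathbb{C}}$. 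This is the crux, and the whole reason for passing to the complex domain: unit-modulus scalings are rotations, which a compactly supported rotationally symmetric measure respects, whereas no compactly supported real measure is invariant under the genuine scalings of $G$ (the point made immediately before the proposition). I expect this to be the main obstacle; everything downstream is routine once it is in hand.

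With unitarity established, I would observe that boundedness ($\|\mathcal{Q}\|\le 1$) is immediate, idempotency $\mathcal{Q}^2=\mathcal{Q}$ follows from translation invariance of the normalized Haar measure $\lambda_{\mathbb{C}}$ (substituting $\mathbf w\mapsto\mathbf w\mathbf z$ collapses the double integral), and self-adjointness follows by repeating the computation \eqref{eq.sa1}--\eqref{eq.sa4} verbatim with $G$ replaced by $\mathbb{T}^k$, the change of variables across \eqref{eq.ginverse} being licensed precisely by the $\mathbb{T}^k$-invariance of $\mu_{\mathbb{C}^d}$ just proved. A bounded idempotent self-adjoint operator is an orthogonal projection, and its image is exactly its fixed subspace; the same Haar substitution shows both that $\mathcal{Q}f$ is $\mathbb{T}^k$-invariant and that $\mathcal{Q}$ fixes invariants, so $\mathrm{Image}(\mathcal{Q})$ equals the subspace of $\mathbb{T}^k$-invariant elements of $\mathcal{H}_{\mathbb{C}}$.

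Finally I would identify this fixed subspace with the scaling-invariant functions and restrict to $U$. By the Zariski-density argument given above, $G=(\mathbb{R}_{>0})^k$, $\mathbb{T}^k$, and $(\mathbb{C}^\times)^k$ share the same invariants, so the $\mathbb{T}^k$-invariant subspace coincides with the space $\bar{\mathcal{H}}_{\mathbb{C}}$ of dimensionless (scaling-invariant) functions; the monomial computation \eqref{eq.proj_monomials} makes this concrete, exhibiting $\mathcal{Q}$ as the identity on dimensionless rational monomials and zero otherwise, in agreement with the Buckingham constraint \eqref{b.pi}. Since $U$ is a closed linear subspace stable under the scalings, the integrand of \eqref{eq.proj.complex} stays in $U$, so $\mathcal{Q}(U)\subseteq U$ and the orthogonal projection $\mathcal{Q}$ restricts to $U$, giving $U=\bar U\oplus U^{\perp}$ with $\bar U=\mathrm{Image}(\mathcal{Q})=U\cap\bar{\mathcal{H}}_{\mathbb{C}}$ and $U^{\perp}=\ker(\mathcal{Q})=U\cap\mathcal{H}_{\mathbb{C}}^{\perp}$. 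Apart from the unitarity check, the only technical care needed is a Fubini justification for interchanging the Haar and $\mu_{\mathbb{C}^d}$ integrals, which is unproblematic since $\mathbb{T}^k$ is compact and the integrands are in $\mathscr{L}_2$.
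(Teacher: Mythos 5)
Your proposal is correct and follows essentially the same route as the paper: the paper reduces the claim to (a) $\mathcal Q(U)\subset U$, (b) $f$ scaling-invariant iff $\mathcal Q f=f$, and (c) self-adjointness of $\mathcal Q$ with respect to $\langle\cdot,\cdot\rangle_{\mu_{\mathbb C^d}}$, proving (c) by the same change-of-variables computation you describe, licensed by the rotational ($\mathbb T^k$-) invariance of $\mu_{\mathbb C^d}$. Your additional remarks (unitarity of the torus representation, idempotency via Haar invariance, the Zariski-density identification of $\mathbb T^k$-invariants with $G$-invariants) just make explicit steps the paper leaves to the surrounding discussion, so the two arguments are the same in substance.
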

\begin{proof}
It suffices to show: (a) $\mathcal Q (U)\subset U$. (b) $f$ is scaling invariant if and only if $\mathcal Q(f)=f$. (c) $\mathcal Q$ is self adjoint for $\langle \cdot, \cdot \rangle_{\mu_{\mathbb C^d}}$. This can be shown with a similar argument to \eqref{eq.sa1}-\eqref{eq.sa4}:
\begin{eqnarray}
 \langle \mathcal Q f, h \rangle_{\mu_{\mathbb C^d}} &=& \int_{X_{\mathbb C}^d} \left\langle
  \int_{\mathbb T^k } f(\mathbf z \cdot  \mathbf x) 
  d\lambda_{\mathbb C}( \mathbf z) , 
  h^*(\mathbf x) \right \rangle d\mu_{\mathbb C^d}(\mathbf x)\\
&=& 
\int_{X_{\mathbb C}^d} \int_{\mathbb T^k } \langle  f(\mathbf z \cdot \mathbf x) , h^*(\mathbf x) \rangle  d\lambda_{\mathbb C}(\mathbf z) \, 
d\mu_{\mathbb C^d}(\mathbf x) \\
&=& \int_{X_{\mathbb C}^d} \int_{\mathbb T^k } \langle  f( \mathbf x) , 
h^*(\mathbf z^{-1}\cdot \mathbf x) \rangle  d\lambda_{\mathbb C}(\mathbf z) \, d\mu_{\mathbb C^d}(\mathbf x) \label{eq.ginverse2}\\
&=& \langle f,  \mathcal Q h \rangle_{\mu_{\mathbb C^d}},
\end{eqnarray}
where $h^*$ is the complex conjugate of $h$.
Note that \eqref{eq.ginverse2} holds because the measure $\mu_{\mathbb C^d}$ is invariant with respect to scalings in $\mathbb T^k$ (namely, $\mu_{\mathbb C^d}$ is rotationally symmetric). 
\end{proof}

Note this argument is not possible for $\mu$. We'll see at the end of this section that $\mathcal Q$ is not self-adjoint with respect to any real inner product in general.

Our argument in the previous section shows that when $U$ is a space generated by rational monomials, the projection onto $\bar U$ is easy to characterize. In particular, a simple computation (Proposition \eqref{eq.orthogonal}) shows that the complex rational monomials are orthogonal in $\mathscr L_2( \mathbb C^d, \mu_{\mathbb C^d})$.
\begin{proposition} \label{eq.orthogonal}
Let $U$ be the space of rational monomials, spanned by
\begin{equation}
\mathcal B:=\left\{\mathbf x^{\mathbf a}:= \prod_{i=1}^d x_i^{a_i}: \mathbf a=(a_1,\ldots , a_d) \in \mathbb Z^d\right\}.
\end{equation} 
Then  for all  $\mathbf x^{\mathbf a}, \mathbf x^{\mathbf a'} \in \mathcal B$ with $\mathbf a \neq \mathbf a'$ we have $\langle\mathbf x^{\mathbf a}, \mathbf x^{\mathbf a'}\rangle_\mu = 0$.
\end{proposition}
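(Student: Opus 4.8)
The plan is to evaluate $\langle \mathbf x^{\mathbf a}, \mathbf x^{\mathbf a'}\rangle_{\mu_{\mathbb C^d}}$ by brute force, exploiting the product structure and rotational symmetry of the measure. Writing $\nu$ for the restriction of planar Lebesgue measure to the annulus $X_{\mathbb C}=\{z\in\mathbb C: a<|z|<b\}$, the defining measure factors as $\mu_{\mathbb C^d}=\nu^{\otimes d}$, since Lebesgue measure on $\mathbb C^d\cong\mathbb R^{2d}$ restricted to the product domain $(X_{\mathbb C})^d$ is the product of the restrictions. Because the integrand also factors, $\mathbf x^{\mathbf a}\,\overline{\mathbf x^{\mathbf a'}}=\prod_{i=1}^d x_i^{a_i}\,\overline{x_i^{a_i'}}$, Fubini gives
\begin{equation}
\langle \mathbf x^{\mathbf a}, \mathbf x^{\mathbf a'}\rangle_{\mu_{\mathbb C^d}} = \prod_{i=1}^d \int_{X_{\mathbb C}} x_i^{a_i}\,\overline{x_i^{a_i'}}\, d\nu(x_i).
\end{equation}

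Next I would pass to polar coordinates $x_i=re^{i\theta}$, under which planar Lebesgue measure becomes $r\,dr\,d\theta$, so that each one-dimensional factor splits into a radial and an angular integral,
\begin{equation}
\int_{X_{\mathbb C}} x_i^{a_i}\,\overline{x_i^{a_i'}}\, d\nu(x_i) = \left(\int_a^b r^{a_i+a_i'+1}\, dr\right)\left(\int_0^{2\pi} e^{i(a_i-a_i')\theta}\, d\theta\right).
\end{equation}
The crux is the angular integral: by orthogonality of the circle characters $\theta\mapsto e^{in\theta}$, it equals $2\pi$ when $a_i=a_i'$ and vanishes otherwise. This is exactly the single-coordinate shadow of the character-orthogonality computation already used to derive \eqref{eq.proj_monomials}, and it is precisely the complex conjugation on the second argument that combines the two factors into the single character $e^{i(a_i-a_i')\theta}$ detecting equality of exponents. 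Since $\mathbf a\neq\mathbf a'$, there is at least one index $i$ with $a_i\neq a_i'$; that factor contributes a zero, forcing the whole product, hence the inner product, to vanish.

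I do not anticipate a genuine obstacle: once the product structure of $\mu_{\mathbb C^d}$ is invoked, the statement reduces to one-dimensional angular orthogonality, which is elementary. The only points needing a word of care are (i) confirming that each monomial actually lies in $\mathscr L_2(\mathbb C^d,\mu_{\mathbb C^d})$, which holds because the support is a bounded annulus bounded away from $0$, so every radial integral $\int_a^b r^{a_i+a_i'+1}\,dr$ is finite even for negative exponents and no poles are encountered; and (ii) tracking the conjugation carefully so the angular test is $a_i-a_i'$ rather than $a_i+a_i'$. The essential mechanism is the rotational symmetry of $\mu_{\mathbb C^d}$ under the $\mathbb T^d$-action, the very property that made $\mathcal Q$ self-adjoint in the preceding proposition, which is why the result holds in the complex setting while failing for any real measure.
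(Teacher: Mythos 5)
Your proof is correct and follows essentially the same route as the paper's: Fubini to separate coordinates, polar coordinates on each annulus factor, and vanishing of the angular integral $\int_0^{2\pi} e^{i(a_s-a_s')\theta}\,d\theta$ in a coordinate $s$ where the exponents differ. Your version is in fact a bit more carefully written (you make the full product factorization, the Jacobian $r\,dr\,d\theta$, and the square-integrability of negative-exponent monomials on the annulus explicit, where the paper's displayed computation elides the radial powers), but the underlying mechanism is identical.
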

\begin{proof} Let $\mathbf a \neq \mathbf a'$, then
\begin{align}
 \langle\mathbf x^{\mathbf a}, \mathbf x^{\mathbf a'}\rangle_{\mu_{\mathbb C}} &= \int_{ X_{\mathbb C^d} } \prod_{i=1}^d x_i^{a_{i}} \bar{x}_i^{a'_{i}} d\mu_{\mathbb C^d} \\
&= \int_{X_{\mathbb C^d}}\prod_{i=1}^d r_ie^{j a_i \theta_i} r_ie^{- j a'_i \theta_i}  d\mu_{\mathbb C^d},
\end{align}
where $x_i=e^{j \theta_i}$ and $j=\sqrt{-1}$. Now since $\mathbf a\neq \mathbf a'$ we can choose $s$ such that $a_s\neq a'_s$. Using Fubini-Tonelli's theorem we can write:
\begin{align}
\langle\mathbf x^{\mathbf a}, \mathbf x^{\mathbf a'}\rangle_{\mu_{\mathbb C}}
&= \int_{ X_{\mathbb C^{d-1}}}  \prod_{\substack{i=1\\ i\neq s}}^d r_i e^{j (a_i - a_i') \theta_i}  d\mu_{\mathbb C^{d-1}} \int_{X_{\mathbb C}} r_s e^{j (a_s-a'_s )\theta_s} d\mu_{\mathbb C^1} ,
\end{align}
where the last term is zero because $a_s\neq a'_s$ and the measure $\mu_{\mathbb C^1}$ is rotationally symmetric. 
\end{proof}


This setting allows to generalize the results from \citet{elesedy2021provably} to complex scalings of complex functions.
\begin{proposition}
Let $X\sim \mu_{\mathbb C^d}$ where $\mu_{\mathbb C^d}$ is a rotation invariant distribution in $A \subset \mathbb C^d$. Let $Y=f^*(X) + \xi \in \mathbb C$, where $\xi$ is a random element of $\mathbb C$ that is independent of $X$ with zero mean and finite variance, and $f^*: A \to \mathbb C$ is scaling invariant. Then, for any $f$, the generalization gap satisfies
\begin{equation}
\Delta(f, \mathcal Q f) = \|f^\perp \|_{\mu_{\mathbb C^d}}^2.
\end{equation}
\end{proposition}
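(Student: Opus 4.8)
The plan is to mirror the compact-group computation that produced \eqref{eq.proj}, now carried out in the complex Hilbert space $\mathcal H_{\mathbb C} = \mathscr L_2(\mathbb C^d, \mu_{\mathbb C^d})$, and to lean on the preceding Proposition, which guarantees that $\mathcal Q$ is the \emph{orthogonal} projection onto the invariant subspace $\bar U$. First I would expand the risk \eqref{eq.risk} for the data model $Y = f^*(X) + \xi$. Writing $Y - f(X) = (f^*(X) - f(X)) + \xi$ and using $|a+b|^2 = |a|^2 + |b|^2 + 2\,\mathrm{Re}(a\bar b)$, I would take the expectation jointly over $X$ and $\xi$. The cross term $2\,\mathrm{Re}\,\mathbb E[(f^*(X)-f(X))\bar\xi]$ vanishes because $\xi$ is independent of $X$ with $\mathbb E[\xi]=0$, hence $\mathbb E[\bar\xi]=0$, leaving
\begin{equation}
\mathcal R(f) = \|f^* - f\|_{\mu_{\mathbb C^d}}^2 + \mathbb E|\xi|^2.
\end{equation}
The noise term $\mathbb E|\xi|^2$ is the same constant for every predictor, so it cancels in the gap:
\begin{equation}
\Delta(f, \mathcal Q f) = \|f^* - f\|_{\mu_{\mathbb C^d}}^2 - \|f^* - \mathcal Q f\|_{\mu_{\mathbb C^d}}^2.
\end{equation}

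Next I would invoke the orthogonal splitting $f = \mathcal Q f + f^\perp$ with $\mathcal Q f \in \bar U$ and $f^\perp \in U^\perp = \ker\mathcal Q$, which is precisely the content of the preceding Proposition (itself resting on the rotational symmetry of $\mu_{\mathbb C^d}$). Since $f^*$ is scaling invariant we have $\mathcal Q f^* = f^*$, so $f^* - \mathcal Q f$ again lies in $\bar U$, while $f^\perp$ lies in $U^\perp$. Writing $f^* - f = (f^* - \mathcal Q f) - f^\perp$ as a sum of orthogonal components and applying the Pythagorean identity gives
\begin{equation}
\|f^* - f\|_{\mu_{\mathbb C^d}}^2 = \|f^* - \mathcal Q f\|_{\mu_{\mathbb C^d}}^2 + \|f^\perp\|_{\mu_{\mathbb C^d}}^2,
\end{equation}
and substituting into the expression for $\Delta$ yields $\Delta(f, \mathcal Q f) = \|f^\perp\|_{\mu_{\mathbb C^d}}^2$, as claimed.

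The calculation itself is routine; the genuine work has already been done in establishing that $\mathcal Q$ is an orthogonal projection in $\mathscr L_2(\mathbb C^d, \mu_{\mathbb C^d})$, which is exactly what forces $f^* - \mathcal Q f$ and $f^\perp$ to be orthogonal and makes the Pythagorean step legitimate. The main points to watch are therefore the complex bookkeeping: one must use the Hermitian inner product throughout, note that the noise cross-term decouples through $\mathbb E[\bar\xi]=0$ (the conjugate of the zero-mean hypothesis), and confirm that $f$ lies in the domain where $\mathcal Q$ is defined and acts as an orthogonal projection (e.g.\ the span of rational monomials, where the earlier orthogonality Proposition applies). No new non-compactness obstruction appears, precisely because Weyl's trick has already traded the scaling group for the compact torus $\mathbb T^k$ in the complexified setting.
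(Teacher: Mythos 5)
Your proof is correct and follows exactly the route the paper intends: the paper leaves this proposition unproved, deferring to the ``simple computation'' behind \eqref{eq.proj} (Lemma 6 of \citealt{elesedy2021provably}) together with the preceding proposition establishing that $\mathcal Q$ is the orthogonal projection in $\mathscr L_2(\mathbb C^d,\mu_{\mathbb C^d})$, and your expansion of the risk, cancellation of the noise cross-term via $\mathbb E[\bar\xi]=0$, and Pythagorean step are precisely that argument carried out in the complex setting. Your closing caveat---that ``any $f$'' must be read as $f$ in a subspace $U$ closed under scalings where $\mathcal Q$ acts as an orthogonal projection---is the right thing to watch and is consistent with the paper's hypotheses.
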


\paragraph{Discussion of real units-equivariant functions}
Even though some dimensional quantities can be complex, for example electromagnetic field amplitudes in Fourier space, most dimensional quantities are real-valued, and the dimensional scalings are always real, therefore the above theory is not directly applicable.

Unfortunately the analysis above will not hold for the real case. Note that the Reynolds operator defined in \eqref{eq.proj.complex} is well-defined for real-analytic functions and delivers the same projection in the rational monomials case (i.e. it drops the non-dimensionless rational monomials). However this projection does not correspond to an orthogonal projection with respect to any nontrivial measure $\mu$. One way to see this is by observing that the monomials cannot be orthogonal for any non-trivial real measure $\mu$. 

For example, suppose we have two input features $x$, $y$ and the dimensionless space is generated by $xy$. Then the Reynolds projection $\mathcal Q$ will satisfy that $\mathcal Q (xy^{-1})=0$ (see Eq. \eqref{eq.proj_monomials}). 
However, no non-trivial real measure will satisfy that $\langle xy, xy^{-1} \rangle_\mu =0$ because $\langle xy, xy^{-1} \rangle_\mu = \int (xy) (xy^{-1}) d\mu = \int x^2 d\mu = \langle x, x \rangle_\mu >0$. 

The underlying question here is, what is the right notion of projection of a real function onto the space of units-equivariant functions?
The Reynolds projection is the most natural projection from an algebraic point of view, per the discussion following \eqref{eq.projection}.
The orthogonal projection with respect to the $\mathscr L^2$-norm of the measure of the data is the one corresponding to the estimator risk \eqref{eq.risk}. The fact that these two projections diverge in the present case stems from the fact that the measure from which the data is drawn cannot itself be scaling-invariant. Furthermore, there is no reason to expect that the space spanned by rational monomials is closed under the orthogonal projection in $\mathscr L^2(\mathbb R^d, \mu)$.

Note that our algorithm does not perform a projection, it directly optimizes in a space of invariant functions. The generalization gap one may want to investigate is $\Delta(\hat f, \bar f)$ where $\hat f$ is the output of a baseline regression, and $\bar f$ is our units-equivariant regression. We show specific examples in \secref{sec:experiments}.



\end{document}